\newtheorem*{rep@theorem}{\rep@title}
\newcommand{\newreptheorem}[2]{%
\newenvironment{rep#1}[1]{%
 \def\rep@title{#2 \ref{##1}}%
 \begin{rep@theorem}}%
 {\end{rep@theorem}}}
\newtheorem{theorem}{Theorem}
\newtheorem{lemma}{Lemma}
\newtheorem{proposition}{Proposition}
\newcommand{\mathsep}{,~}
\newcommand{\st}{\,\middle|\,}
\newcommand{\set}[1]{\left\lbrace #1 \right\rbrace}
\newcommand{\card}[1]{\left\lvert{#1}\right\rvert}
\newcommand{\absv}[1]{\card{#1}}
\newcommand{\norm}[2]{\left\lVert{#1}\right\rVert_{#2}}
\newcommand{\setR}{\mathbb R}
\newcommand{\probability}[1]{\mathbb P\left[#1\right]}
\newcommand{\expectVariable}[2]{\mathbb E_{#1}\left[#2\right]}
\DeclareMathOperator*{\argmin}{arg\,min}
\newtheorem{definition}{Definition}
\newcommand{\model}{\theta}
\newcommand{\modelsub}[1]{\model_{#1}}
\newcommand{\modelfamily}{\vec \theta}
\newcommand{\targetmodel}{\model^{\dagger}_{\strategicuser}}
\newcommand{\attackmodel}{\model^{\spadesuit}_{\strategicuser}}
\newcommand{\globalmodel}{\rho}
\newcommand{\attackdata}{\data{\strategicuser}^\spadesuit}
\newcommand{\optmodelsub}[1]{\model_{#1}^*}
\newcommand{\optmodelfamily}{\vec{\model}^*}
\newcommand{\optglobalmodel}{\globalmodel^*}
\newcommand{\defaultglobalmodel}{\globalmodel^{\infty}}
\newcommand{\dimension}{d}
\newcommand{\honestmodel}{\model^\dagger}
\newcommand{\honestmodelsub}[1]{\model_{#1}^\dagger}
\newcommand{\query}[1]{\mathcal Q_{#1}}
\newcommand{\answer}[1]{\mathcal A_{#1}}
\newcommand{\varx}{x}
\newcommand{\varxsub}[1]{\varx_{#1}}
\newcommand{\vary}{y}
\newcommand{\varysub}[1]{\vary_{#1}}
\newcommand{\varz}{z}
\newcommand{\licchaviweight}{w}
\newcommand{\regweightsub}[1]{\lambda_{#1}}
\newcommand{\regweightglobal}[1]{\mu}
\newcommand{\reglocalweight}[1]{\nu}
\newcommand{\regnormsub}[1]{q_{#1}}
\newcommand{\regnormcommon}[1]{\regnormsub{0}}
\newcommand{\regpowersub}[1]{p_{#1}}
\newcommand{\regpowercommon}[1]{\regpowersub{0}}
\newcommand{\globalmodelnorm}[1]{\mathcal N_{0}}
\newcommand{\Probability}{P}
\newcommand{\modelbound}{\mathcal K}
\newcommand{\event}{\mathcal E}
\newcommand{\USERINPUT}[1]{\mathcal{I}_{#1}}
\newcommand{\coordinate}{i}
\newcommand{\data}[1]{\mathcal D_{#1}}
\newcommand{\datafamily}[1]{\vec{\mathcal D}_{#1}}
\newcommand{\user}{n}
\newcommand{\USER}{N}
\newcommand{\strategicuser}{s}
\newcommand{\targetuser}{t}
\newcommand{\orthogonalmatrix}{Q}
\newcommand{\orthogonalmatrixfamily}{\vec Q}
\newcommand{\noise}[1]{\xi_{#1}}
\newcommand{\unitvector}[1]{{\bf u}_{#1}}
\newcommand{\strategyproofbound}{\alpha}
\newcommand{\diagonal}[1]{\Delta_{#1}}
\newcommand{\utility}[1]{u}
\newcommand{\hessian}[1]{H_{#1}}
\newcommand{\eigenvalue}[1]{\lambda_{#1}}
\newcommand{\sdpmatrix}{S}
\newcommand{\localloss}[1]{\mathcal{L}_{#1}}
\newcommand{\sign}{\textsc{sgn}}
\newcommand{\reducedloss}[1]{\mathcal R_{#1}}
\newcommand{\expectedreducedloss}{\bar{\mathcal R}}
\newcommand{\spectrum}{\textsc{Sp}}
\newcommand{\licchavi}{\textsc{Licchavi}}
\newcommand{\licch}{\textsc{Lch}}
\newcommand{\alg}{\textsc{Alg}}
\newcommand{\achievableset}{\textsc{AchSet}}
\newcommand{\skewness}{\textsc{Skew}}
\newcommand{\crookedness}{\textsc{Crooked}}
\newcommand{\diag}{\textsc{Diag}}
\newcommand{\face}{\textsc{Face}}
\newcommand{\edge}{\textsc{Edge}}
\newcommand{\huber}{\textsc{Hb}}
\newcommand{\huberparameter}[1]{\delta_{#1}}
\newcommand{\huberconstant}{\delta_c}
\newcommand{\huberfunction}[1]{\sqrt{1+#1}}
\newcommand{\scalerhuber}{\textsc{Hb}}
\newcommand{\canonicalbasis}{\mathcal E}
\newcommand{\canonicalvector}[1]{{\bf e}^{#1}}
\newcommand{\datadistribution}{\tilde{\mathcal D}}
\newcommand{\errorfunction}[1]{\textsc{Err}_{#1}}
\newcommand{\modelreducedloss}[1]{\mathcal S_{#1}}
\begin{document}

\title{Strategyproof Learning: \\ Collecting Trustworthy User-Generated Datasets}

\author[1]{Sadegh Farhadkhani}
\author[1]{Rachid Guerraoui}
\author[1]{Lê-Nguyên Hoang}
\author[1]{Leo Serena}
\affil[1]{IC, EPFL, Switzerland}

\maketitle

\begin{abstract}
  We prove in this paper that, perhaps surprisingly, incentivizing data misreporting is not a fatality.
  By leveraging a careful design of the loss function, 
  we propose \licchavi{}, 
  a global and personalized learning framework with provable \emph{strategyproofness} guarantees. 
  Essentially, 
  we prove that no user can gain much by replying to \licchavi{}'s queries with answers that deviate from their true preferences.
  Interestingly, \licchavi{} also promotes the desirable ``one person, one unit-force vote'' fairness principle. 
  Furthermore, our empirical evaluation of its performance 
  showcases \licchavi{}'s real-world applicability.
  We believe that our results are critical for the safety of any learning scheme that leverages user-generated data.
\end{abstract}

\section{Introduction}

Today's large-scale algorithms,
designed for autocompletion~\cite{LehmannB21}, conversational~\cite{ShumHL18} and recommendation~\cite{IeJWNAWCCB19} applications,
exploit the data generated from the activities of a large number of users~\cite{SmithSPKCL13,WangPNSMHLB19,WangSMHLB19}
to construct both \emph{global} and \emph{personalized} models~\cite{RicciRS11,FallahMO20,HanzelyHHR20}.

However, the fact that \emph{strategic} users may provide \emph{untrustworthy} data challenges the classical theory of learning, which generally regards as desirable to fit available data, and to generalize them for future applications~\cite{Valiant84}.
In applications such as content recommendation, %
 activists, companies and politicians usually have strong incentives to promote certain views, products or ideologies~\cite{Hoang20,HoangFE21}.
After all, two YouTube views out of three result from algorithmic recommendations~\cite{cnet18}.
Quite naturally, this has led to vast amounts of fabricated activities to bias algorithms~\cite{bradshaw19,neudert2019},
through ``fake reviewing''~\cite{WuNWW20}, ``astroturfing''~\cite{ZerbackTK21} or automated harassment~\cite{tekfog}.
In fact, Facebook reportedly removed 15~billion fake accounts within two years~\cite{facebook_fake_accounts}.
This raises serious concerns, especially given today's ``stochastic parrots''~\cite{BenderGMS21}: today's language models incentivize anti-vaccine groups to heavily pollute textual datasets with claims like ``vaccines kill'', including through fake accounts, as autocompletion, conversational and recommendation algorithms trained on such data will more likely spread this view~\cite{McGuffieNewhouse20}.

Arguably, in large-scale environments that naturally attract a large number of malicious entities, like social medias, any data that is not cryptographically signed by authentic trustworthy entities should not be trusted.
In other words, a necessary condition for the safety of learning algorithms is to train them solely on {\it signed} data, that is, data that provably come from a known user. Yet,  this is clearly not a sufficient condition for safety:
even signed data cannot be wholeheartedly trusted.
After all, even authentic users usually have preferences over what ought to be recommended to others, and thus have incentives to behave strategically.

Unfortunately, today's state-of-the-art algorithms strongly incentivize and are extremely vulnerable to such strategic manipulations.
In fact, it was shown by \cite{equivalence_data_gradient} that classical personalized federated learning algorithms like \cite{DinhTN20,HanzelyHHR20} can be arbitrarily manipulable by a single strategic user, through the injection of a surprisingly small amount of poisonous data.
In particular, such a user would be incentivized to construct an attack model, and to provide data labeled with this attack model rather than with the user's preferred model.
Assuming most users behaving strategically, the data thereby collected would inevitably be hopelessly untrustworthy:
any algorithm trained with such data could be dangerously manipulated and weaponized by malicious data providers.

In this paper, we ask whether an algorithm can achieve performant (personalized) learning, while incentivizing users' honest data generation and reporting.
In the parlance of social choice theory, such an algorithm is called \emph{strategyproof}.

To pose and address the question in a precise manner, we
propose a new and rigorous definition of strategyproofness in the context of learning from user-generated data.
We then
introduce \licchavi{}\footnote{The Licchavis were a clan in Ancient India, and are credited for their early form of proto-democracy.}, a relatively simple, yet general, learning framework, based on a careful design of the underlying loss function.
We assume that users generate data by labeling them.
Equivalently, this can be regarded as users being given queries, and providing answers to these queries.
Such answers can be given honestly, using the user's implicit preferred model, or can be given strategically to bias the global model, or other users' learned models.
\licchavi{} then leverages users' data to perform both global and personalized learning,
by penalizing the discrepancy between the global model and users' models.
This essentially captures the trade-off, for each user, between fitting other users' data and model personalization.
Critically, we use a \emph{coordinate-wise pseudo-Huber} penalization, which allows to derive strategyproofness guarantees.

\licchavi{} also has applications in high-dimensional voting, e.g. to determine the parameters of a content moderation algorithm on a social network.
In the same vein as~\cite{NoothigattuGADR18,LeeKKK+19,FreedmanBSDC20}, \licchavi{} would query the voter, collect the voter's answers and then use machine learning to model the voter's preferences.
In practice, however, especially in high dimensions, each voter often provides an insufficient amount of data.
This prevents the model from reliably learning their preferences.
\licchavi{} allows improving the sample complexity by leveraging other voters' inputs to better learn a voter's preferences.
More importantly, the global model learned by \licchavi{} can then be regarded as the output of the high-dimensional voting.

Interestingly, in addition to providing strategyproofness guarantees, the use of such coordinate-wise pseudo-Huber penalizations also implies an intuitively appealing fairness principle that \licchavi{} satisfies.
Basically, we show that \licchavi{} essentially fits the appealing fairness principle ``one voter, (at most) one unit force''~\cite{geometric_median}, assuming that this force is measured by the $\ell_\infty$-norm, while also accounting for the uncertainty \licchavi{} inevitably has on a user's preferred model, when the user does not provide sufficiently many data.
All in all, this makes \licchavi{} a very promising tool for scalable algorithmic governance, especially in controversial contexts where users' preferences are expected to greatly diverge.

\paragraph{Contributions.}
Our main contribution is to introduce \licchavi{} and to analyze its  {\it strategyproofness}, i.e., whether it is in each user's interest to answer queries honestly.
We first prove that, unfortunately, assuming that each user wants to minimize the Euclidean distance between a target user's model and their preferred model,  \licchavi{} cannot be guaranteed to always be $\strategyproofbound$-strategyproof.
Fortunately, we also prove that, for gradient PAC* coordinate-wise separable local losses, \licchavi{} is guaranteed to be strategyproof.
We also discuss how to leverage this result to tune \licchavi{} to obtain approximate strategyproofness in the general case, when local losses are not coordinate-wise separable.

Our second result is that, even without this tuning, in the asymptotic case of a large number of voters, \licchavi{} is $\strategyproofbound$-strategyproof, for a value of $\strategyproofbound$ that we explicitly compute based on the distribution of voters' preferred models.
In short, we argue that the study of the asymptotic strategyproofness of \licchavi{} can be reduced to the study of a related strategyproofness problem.
We then go on proving that, for this related problem, \licchavi{} is $\strategyproofbound$-strategyproof.
This result constitutes a fair argument for why strategic users will not have strong incentives to provide fabricated rather than honest data, in the general case.

Our paper also shows how easy \licchavi{} is to deploy for practical machine learning tasks.
We do so by considering the case of the personalized federated fine tuning of language models on a set of tweets published on Twitter.
Our empirical evaluation\footnote{The code, the dataset, and the instructions for reproducibility can be found \href{https://drive.google.com/file/d/151ai2290oYYWqQe3LZgRzhzRwo3GMBh8/view?usp=sharing} {\bf here.}} conveys the fact that \licchavi{} provides good performances, at least compared to classical variants~\cite{DinhTN20,HanzelyHHR20}.

 \paragraph{Related work.} There is a large body of work 
 on the strategyproofness of learning problems, including regression \cite{ChenPPS18, DEKEL2010759,PEROTE2004153,ben17}, classification \cite{MEIR2012123, chen2020, Meir11, hardt16}, statistical estimation \cite{yang15}, and clustering \cite{Perote04}. 
 The goal has been mainly to train a \emph{single} model that 
 incentivizes the honesty of users who aim to bias the model in their favor (e.g., by pulling the regression model towards their own desired points or achieving a classifier correctly their own labels). 
 But none of these papers studies the strategyproofness of a general \emph{global and personalized learning} framework.
 
 In the case of linear regression, 
 \cite{ChenPPS18} and \cite{PEROTE2004153} assume that each user can only provide a single data point.
 Unfortunately, this greatly restricts the users' ability to contribute to the learning model.
 Whilst \cite{DEKEL2010759} allows users to provide multiple data points, they either require payments, which might not be possible (e.g., due to ethical reasons), or they restrict the model to one dimension or a constant function in $\setR^d$. 
 \licchavi{}, in contrast, does not make use of any payment, nor does does it restrict the dimension of the model, and yet enables users to contribute large datasets.

Note that other desirable properties of coordinate-wise regularizations in general (typically $\ell_1$ regularization) have been previously observed,
both in terms of generalization~\cite{tibshirani1996,WANG2013135,ShiKAZ22}, robustness~\cite{XuCM08,Ding18,PesmeF20} and strategyproofness~\cite{goel20,DEKEL2010759} (in restricted settings).
Here, we show how it can be used to provide strategyproofness guarantees for a very general global and personalized learning scheme.

\paragraph{Structure of the paper.} The rest of the paper is organized as follows. 
Section~\ref{sec:model} introduces 
introduce  \licchavi{}.
Section~\ref{sec:strategyproof} presents our first main contribution, the strategyproofness analysis for the non-asymptotic case.
We also discuss the tuning of \licchavi{} for approximate strategyproofness.
Section~\ref{sec:asymptotic} introduces our second main contribution,  the asymptotic strategyproofness analysis.
Section~\ref{sec:experiment} presents our empirical evaluation of \licchavi{}.
Section~\ref{sec:conclusion} concludes.
Proofs are provided in the Appendix.
\section{\licchavi{}}
\label{sec:model}

We consider a set $[\USER] = \{ 1, \ldots, \USER \}$ of users.
Each user $\user \in [\USER]$ is repeatedly provided with queries $\query{}$ (which they may select themselves), and is asked to provide answers $\answer{}$.
The set of user $\user$'s query-answer pairs $(\query{}, \answer{})$ forms the user's reported dataset $\data{\user}$.
We denote by $\datafamily{} \triangleq (\data{1}, \ldots, \data{\USER})$ the tuple of users' datasets.

Our goal is to perform both \emph{global and personalized learning} (GPL).
Namely, for each user $\user$, we want to recover a model $\modelsub{\user} \in \setR^\dimension$ that fits and generalizes their reported data $\data{\user}$. We let $\modelfamily{} \triangleq \set{\modelsub{1}, \ldots, \modelsub{\USER}}$ denote the the tuple of users' local models.
Additionally, we want to learn a common global model $\globalmodel \in \setR^\dimension$, which may typically be used for community-level decisions, e.g., in the context of content moderation.
This amounts to constructing a GPL algorithm $\alg{} : \datafamily{} \mapsto (\globalmodel^{\alg{}} (\datafamily{}), \modelfamily{}^{\alg{}} (\datafamily{}))$.

To do so, we consider that any user $\user$'s dataset $\data{\user}$ defines a strongly convex and differentiable local loss function $\localloss{} (\modelsub{\user} | \data{\user})$.
We then draw inspiration from \emph{personalized federated learning}~\cite{DinhTN20,HanzelyHHR20,equivalence_data_gradient} to improve sample efficiency, and learn appropriate models even for users whose datasets are very limited, by adding terms that penalize the discrepancies between users' local models $\modelsub{\user}$ and the global model $\globalmodel$.

Now, unfortunately, as shown by \cite{equivalence_data_gradient}, some classical personalized federated learning algorithms like \cite{DinhTN20,HanzelyHHR20} are extremely vulnerable to strategic attacks.
To remedy this vulnerability, we introduce \licchavi{}.
Essentially, \licchavi{} leverages coordinate-wise pseudo-Huber losses~\cite{CharbonnierBAB97,HarlteyZisserman06} to learn a global model.
More precisely, given users' datasets $\datafamily{}$, \licchavi{} outputs a minimum $(\optglobalmodel, \optmodelfamily)$ of the following loss function:
\begin{align}
    \licch{} (\globalmodel, \modelfamily{} | \datafamily{})
    &\triangleq \sum_{\user \in [\USER]} \localloss{} (\modelsub{\user} | \data{\user}) 
    + \licchaviweight\sum_{\user \in [\USER]} \huber_{\frac{\huberconstant}{\huberfunction{\card{\data{\user}}}}} \left(\modelsub{\user} - \globalmodel \right),
\end{align}
where $\huber_{\huberparameter{}} \left(\varz \right)
  \triangleq \sum_{\coordinate \in [d]} \scalerhuber_{\huberparameter{}} (\varz_\coordinate)
  \triangleq \sum_{\coordinate \in [d]} \sqrt{\huberparameter{}^2+ \varz_\coordinate^2}$,
and where $\licchaviweight, \huberconstant > 0$ are hyperparameters of \licchavi{}.
In spirit, \huber{} acts like an $\ell_1$ penalty.
In fact, when $\card{\data{\user}} \rightarrow \infty$, then the \huber{} term converges uniformly to the $\ell_1$ loss.

More importantly, like with $\ell_1$ loss, the pull of each user on the global model
in each direction is bounded by $\licchaviweight$ (this will be formalized by Lemma~\ref{lemma:gradient_reduced_loss}).
This enforces the fairness principle ``one person, (at most) one unit force vote''~\cite{geometric_median}.
This property turns out to be critical for \emph{strategyproofness} (and also implies \emph{robustness}!).
But, interestingly, \huber{} has additional desirable properties.
As opposed to $\ell_1$ loss, $\huber$ is smooth, which makes it easier to optimize and more numerically stable.
Also, the fact that it is closer to a quadratic loss for users with few data points means that such users will act on $\globalmodel$ with a weaker force.
This is consistent with the idea that they ought to be more uncertain about how to pull on $\globalmodel$.
In fact, we chose a typical uncertainty $\frac{\huberconstant}{\huberfunction{\card{\data{\user}}}}$ which decays with the square root of the number of user $\user$'s data, to be consistent with the posterior's standard deviation.
Finally, unlike $\ell_1$ loss, $\huber$ is strictly convex.
Combining all these properties enables us to guarantee that \licchavi{} is well-defined.

\begin{proposition}
\label{prop:existence_uniqueness}
 For any datasets $\datafamily{}$, $\licch{}$ yields a unique minimum,
 which we denote by $\globalmodel^{\licch{}} (\datafamily{})$ and $\modelfamily{}^{\licch{}} (\datafamily{})$.
\end{proposition}

\begin{proof}[Sketch of proof]
The loss is clearly convex overall, and strictly convex in $\modelsub{\user}$.
But given $\modelfamily{}$, it is then strictly convex with respect to $\globalmodel$.
This proves uniqueness.
Moreover, if $\modelsub{\user}$ has a norm too large, then, by strong convexity, the global loss takes values larger than its value at 0.
Thus the minimum must be reached for local models within a compact region.
But then, for $\modelsub{\user}$ in this region, when $\globalmodel$ has a norm too large, the global loss takes values larger than its value at 0.
Hence the minimum must be reached within a bounded region for all models, which proves the existence of a minimum.
The full proof is given in Appendix~\ref{sec:existence_uniqueness_proof}.
\end{proof}

\section{Strategyproofness}
\label{sec:strategyproof}

In this section, we study the strategyproofness of \licchavi{}.
We prove that, unfortunately, \licchavi{} provides no general guarantee of $\strategyproofbound$-strategyproofness.
Remarkably, however, we identify a sufficient condition for \licchavi{} to guarantee strategyproofness.
But before presenting our results, we first clearly define strategyproofness, and stress how challenging it is to make any participatory system strategyproof.

\subsection{What is strategyproofness?}
\label{sec:definition_strateyproofness}

Essentially, a participatory system is strategyproof if it incentivizes honest participation.
This means that, in a strategyproof system and in the context of machine learning, it should be in each user's best interests to label data as \emph{they} think the data should be labeled.

\paragraph{Why strategyproofness matters.}
We first stress that strategyproofness is critical for safely learning from user-generated data.
After all, the theory of learning relies on the core principle that \emph{generalizing training data is desirable}.
However, if a learning algorithm strongly incentivizes data misreporting,
perhaps because many users have strong desires or pressures to promote certain products, views or ideologies,
and because dishonesty or misbehaviors strongly favor such outcomes,
then we should expected the algorithm to generalize very misleading, and potentially dangerous, activities.
More generally, \emph{learning algorithms are shaped by their training datasets}.
As a result, their safety strongly depends on the soundness of the data they are trained with.
Strategyproofness is arguably one of the most needed properties to guarantee data soundness,
especially in high-stake environments, e.g., involving information warfare~\cite{lin2019}.

\paragraph{How strategyproofness differs from Byzantine learning.}
Over the last five years, a large body of research~\cite{BlanchardMGS17,MhamdiGR18,baruch19,El-MhamdiGGHR20,KarimireddyHJ21,collaborative_learning,He20} has focused on Byzantine learning, which aims to guarantee the safety of learning despite the presence of participants with arbitrary (potentially maximally malicious) behaviors.
This property is clearly important as well.
After all, especially if the number $\USER$ of users is large, then we should expect the presence of at least a few users with essentially nonsensical activities.

Having said this, we stress that strategyproofness is an orthogonal, complementary and equally important property in practical deployments.
The main reason for this is that strategyproofness considers an arguably more common class of users.
Namely, instead of assuming \emph{arbitrary} or \emph{maximally malicious} behaviors, strategyproofness considers \emph{strategic} users.
Such users are \emph{goal-directed}.
Typically, a strategic user will want the global model to promote their views, or they will want to make other users' models recommend content aligned with the strategic user's preferences.

Crucially, the Byzantine learning literature usually assumes that the vast majority of users behave \emph{honestly}.
This assumption often justifies them in \emph{erasing outliers}.
However, especially in a heterogeneous setting, such as a controversial political debate, erasing outliers can be argued to be unethical, as it amounts to silencing minorities' views.
Perhaps equally importantly, the honest majority assumption also dangerously fails, if \emph{most} users behave strategically.
If so, then the users' reported datasets may be hopelessly dishonest; and generalizing any of it could be highly dangerous.

\paragraph{Strategyproofness is scarce.}
A reader unfamiliar with strategyproofness might feel underwhelmed by the positive results of our paper.
Let us thus stress how rare this property is.
In the 1970s, \cite{gibbard1973} and \cite{satterthwaite1975} independently proved that the only \emph{strategyproof}, \emph{unanimous}\footnote{A vote is unanimous, if, when all users prefer the same alternative and vote honestly, then the vote outputs this unanimously preferred alternative.} and \emph{deterministic} voting algorithm is \emph{dictatorship}.
Later, \cite{gibbard1978} added that the only \emph{strategyproof}, \emph{unanimous} and \emph{neutral}\footnote{A vote is neutral if the alternatives in contention in the vote play a symmetric role.} voting algorithm is \emph{random dictatorship}.
More positive results can be obtained by assuming additional structures on participants' preferences; but even then, they are restrictive.
For instance, \cite{KIM198429} proved that, in dimension 2 and assuming users want the output vector to be as close as possible (in Euclidean norm) to their preferred vector, then the only \emph{strategyproof}, \emph{anonymous}\footnote{A vote is anonymous if the users play a symmetric role.} and \emph{continuous} voting algorithm is the (generalized) coordinate-wise median.
As a fourth example, previous results on strategyproof linear regression by \cite{ChenPPS18} and \cite{PEROTE2004153} only address the very restrictive case where each participant can only provide a single data point.
Given this, our positive theorems about the strategyproofness of \licchavi{} should be regarded as major steps forward in strategyproof learning theory.

\paragraph{Formal definition.}
We now formalize strategyproofness.
The focus here will be on the incentives of any single, omniscient and {\it strategic} user $\strategicuser \in [\USER]$, with a preferred model $\targetmodel$.
We consider that the user's honest behavior consists of (randomly) drawing a large number of queries $\query{}$, and to answer them using their preferred model $\targetmodel$.
The precise way of answering the queries depends on the problem (see~\cite{equivalence_data_gradient}).
For instance, for linear regression, an answer could be of the form $\answer{}^\dagger = \query{}^T \targetmodel + \noise{}$, where $\noise{}$ may typically be a zero-mean noise.
The honest dataset $\data{\strategicuser}^\dagger$ would then be the set of pairs $(\query{}, \answer{}^\dagger)$ thereby constructed.

By contrast, when being strategic, user $\strategicuser$ can report any alternative strategic dataset $\data{\strategicuser}^\spadesuit$.
Additionally, user $\strategicuser$ is assumed to know the datasets $\datafamily{-\strategicuser} \triangleq \left( \data{\user} \right)_{\user \neq \strategicuser}$ provided by other users, and can adapt their choice of the strategic dataset $\data{\strategicuser}^\spadesuit$ accordingly.
Importantly, user $\strategicuser$ is assumed to want to bias the learned global model (or a target user $\targetuser$'s local model) towards their preferred model $\targetmodel$.
More precisely, we assume here that the strategic user's goal is to minimize the Euclidean distance\footnote{Appendix~\ref{app:strategyproof_proof} generalizes our results to any norm invariant by coordinate-wise reflections, e.g., any $\ell_p$ norm.}
between $\globalmodel^\alg{}$ and $\targetmodel$, or between $\modelsub{\targetuser}^\alg{}$ and $\targetmodel$.
Depending on where the strategic user's focus is, we then have the two following definitions.

\begin{definition}%
A global learning algorithm \alg{} is global-targeted $\strategyproofbound$-strategyproof if,
for any preferred model $\targetmodel{} \in \setR^\dimension$
and any other users' datasets $\datafamily{-\strategicuser}$,
given any $\varepsilon, \delta > 0$,
there exists $\USERINPUT{}$ such that,
if $\data{\strategicuser}^\dagger$ is a dataset obtained by honestly answering at least $\USERINPUT{}$ random queries with the preferred model $\targetmodel$,
then with probability at least $1-\delta$,
\begin{align}
    &\forall \data{\strategicuser}^{\spadesuit} \mathsep
    \norm{ \globalmodel^{\alg{}} (\data{\strategicuser}^\dagger, \datafamily{-\strategicuser}) - \targetmodel }{2}
    \leq (1+\strategyproofbound) \norm{ \globalmodel^{\alg{}} (\data{\strategicuser}^\spadesuit, \datafamily{-\strategicuser}) - \targetmodel }{2} + \varepsilon.
\end{align}
If the bound holds for $\strategyproofbound = 0$, then we simply say that \alg{} is global-targeted strategyproof.
\end{definition}

\begin{definition}%
A personalized learning algorithm \alg{} is user-targeted $\strategyproofbound$-strategyproof if,
for any preferred model $\targetmodel{} \in \setR^\dimension$,
any other users' datasets $\datafamily{-\strategicuser}$
and any target user $\targetuser \in [\USER]$,
given any $\varepsilon, \delta > 0$,
there exists $\USERINPUT{}$ such that,
if $\data{\strategicuser}^\dagger$ is a dataset obtained by honestly answering at least $\USERINPUT{}$ random queries with the preferred model $\targetmodel$,
then with probability at least $1-\delta$,
\begin{align}
    &\forall \data{\strategicuser}^{\spadesuit} \mathsep
    \norm{ \modelsub{\targetuser}^{\alg{}} (\data{\strategicuser}^\dagger, \datafamily{-\strategicuser}) - \targetmodel }{2}
    \leq (1+\strategyproofbound) \norm{ \modelsub{\targetuser}^{\alg{}} (\data{\strategicuser}^\spadesuit, \datafamily{-\strategicuser}) - \targetmodel }{2} + \varepsilon.
\end{align}
If the bound holds for $\strategyproofbound = 0$, then we simply say that \alg{} is user-targeted strategyproof.
\end{definition}

\subsection{Main results}

We can now state our main results of this section, which consist of both a negative and a positive theorem.

\begin{theorem}
\label{th:negative_strategyproofness}
  For any $\strategyproofbound > 0$, \licchavi{} is neither global-targeted $\strategyproofbound$-strategyproof nor user-targeted $\strategyproofbound$-strategyproof.
\end{theorem}

Theorem~\ref{th:negative_strategyproofness} stresses the need of further assumptions to retrieve any strategyproofness.
Here, we identify sufficient conditions to guarantee \licchavi{}'s strategyproofness.
The first condition was first introduced by~\cite{equivalence_data_gradient},
who proved it to hold for linear and logistic regression under very mild conditions.

\begin{definition}[Gradient-PAC*, from~\cite{equivalence_data_gradient}]
Denote $\event(\data{}, \honestmodel, \USERINPUT{}, A, B, \alpha)$ the event
\begin{align*}
\label{eq:gradient-pac}
    &\forall \model \in \setR^d \mathsep
    \left( \model - \honestmodel \right)^T \nabla \localloss{} \left(\model | \data{} \right) \geq
    A \USERINPUT{} \min \left\lbrace \norm{\model - \honestmodel}{2}, \norm{\model - \honestmodel}{2}^2 \right\rbrace
    - B \USERINPUT{}^\alpha \norm{\model - \honestmodel}{2}.
\end{align*}
The loss $\localloss{}$ is gradient-PAC* if,
for any $\modelbound > 0$,
there exist $A_\modelbound, B_\modelbound >0$ and $\alpha_\modelbound <1$ such that,
for any preferred model $\honestmodel \in \setR^d$ with $\norm{\honestmodel}{2} \leq \modelbound$,
assuming that the dataset $\data{}$ is obtained by answering random queries $\query{}$ with model $\honestmodel$,
$\probability{\event(\data{}, \honestmodel, \USERINPUT{}, A_\modelbound, B_\modelbound, \alpha_\modelbound)} \rightarrow 1$
as $\USERINPUT{} \rightarrow \infty$.
\label{ass:unbiased}
\end{definition}

Intuitively, gradient PAC* guarantees that if a user $\user$ answers sufficiently many queries by using a labeling model $\honestmodelsub{\user}$, then the labeling model $\honestmodelsub{\user}$ is robustly approximately reconstructed by minimizing the local loss.
To guarantee strategyproofness, we also demand that the local loss $\localloss{}$ be coordinate-wise separable, which means that it can be written $\localloss{} (\model | \data{}) = \sum_{\coordinate \in [\dimension]} \localloss{\coordinate} (\model_\coordinate | \data{})$.
Section~\ref{sec:approximate_strategyproofness} will discuss how this assumption can be removed, by tuning \licchavi{} to provide approximate strategyproofness.

\begin{theorem}
\label{th:strategyproof}
    Assume that the local losses are gradient PAC* and coordinate-wise separable.
    Then \licchavi{} is both global and user-targeted strategyproof.
\end{theorem}

Let us now outline the nontrivial proofs of the two main theorems.
Interestingly, we successfully decomposed them into lemmas, each of which uncovers insights about personalized federated learning in general, and about \licchavi{} in particular.
The lemma proofs appear in Appendix~\ref{app:strategyproof_proof}.

\subsection{Reduced losses}

First, we note that the study of global-targeted strategyproofness can be reduced to the analysis of a loss which only depends on the global model.
To do so, given a local dataset $\data{}$, we first define the reduced local loss
\begin{equation}
\label{eq:reduced_loss}
    \reducedloss{} (\globalmodel | \data{}) \triangleq
    \inf_{\model \in \setR^d} \localloss{} (\model | \data{}) + \licchaviweight \huber_{\frac{\huberconstant}{\huberfunction{\card{\data{}}}}} \left(\modelsub{} - \globalmodel  \right).
\end{equation}
Below, we show that this reduced local loss is well-behaved.

\begin{lemma}
\label{lemma:local_optimum}
Equation~\eqref{eq:reduced_loss} yields a unique minimum $\optmodelsub{} (\globalmodel, \data{})$.
\end{lemma}

\begin{lemma}
\label{lemma:gradient_reduced_loss}
$\reducedloss{} (\globalmodel | \data{})$ is convex and differentiable.
Moreover, $\nabla \reducedloss{} = \licchaviweight \nabla \huber_{\frac{\huberconstant}{\huberfunction{\card{\data{}}}}} \left( \globalmodel - \optmodelsub{} (\globalmodel, \data{}) \right)$, and  $\norm{\nabla \reducedloss{}}{\infty} \leq \licchaviweight$.
\end{lemma}

Let $\reducedloss{} (\globalmodel | \datafamily{}) \triangleq \sum_{\user \in [\USER]} \reducedloss{} (\globalmodel | \data{\user})$ and $\reducedloss{} (\globalmodel | \datafamily{-\strategicuser}) \triangleq \sum_{\user \neq \strategicuser} \reducedloss{} (\globalmodel | \data{\user})$ be the sum of (other) users' reduced losses.

\begin{lemma}
\label{lemma:local_global_optimum}
$\globalmodel^{\licch{}} (\datafamily{})$ is the unique minimum of $\reducedloss{} (\globalmodel | \datafamily{})$,
while $\modelsub{\user}^{\licch{}} (\datafamily{}) = \optmodelsub{} (\globalmodel^{\licch{}} (\datafamily{}), \data{\user})$.
\end{lemma}

\subsection{Strong local PAC*}
\label{sec:pac}

Another key step of our proofs is to reduce data reporting strategyproofness to model reporting strategyproofness, for gradient PAC* local losses.
\cite{equivalence_data_gradient} also proved that gradient PAC* implies \emph{local PAC*} learning for a large class of personalized federated learning algorithm.
In this paper, we prove a stronger result for the particular case of \licchavi{}.
Namely, we prove that, under gradient PAC* local losses, \licchavi{} is \emph{strongly} local PAC*.

\begin{definition}
\label{def:strong_pac}
  A GPL algorithm \alg{} is strongly local PAC* if,
  for any user $\user$ and any preferred model $\honestmodelsub{\user}$,
  any $\varepsilon, \delta>0$,
  there exists $\USERINPUT{}$ such that,
  if the user $\user$ provides a dataset $\data{\user}^\dagger$ with $\card{\data{\user}^\dagger} \geq \USERINPUT{}$ answers to random queries given using their preferred models $\honestmodelsub{\user}$,
  then, with probability at least $1-\delta$,
  \begin{equation}
      \forall \datafamily{-\user} \mathsep
      \norm{\modelsub{\user}^{\alg{}} \left( \data{\user}^\dagger, \datafamily{-\user} \right) - \honestmodelsub{\user}}{2} \leq \varepsilon
  \end{equation}
\end{definition}

Importantly, as opposed to local PAC* (introduced in \cite{equivalence_data_gradient}), \emph{strong} local PAC* guarantees the accuracy of the learning of $\honestmodelsub{\user}$ \emph{independently} from other users' data $\datafamily{-\user}$.
This is a very desirable property in practice, as it guarantees that a user with sufficiently many data will never be hacked by a very active malicious user.
Interestingly, this is a property that \licchavi{} guarantees.

\begin{lemma}
\label{lemma:strong_local_pac}
  For gradient PAC* local losses, \licchavi{} is strongly local PAC*.
\end{lemma}

\begin{proof}[Sketch of proof]
  The key insight is that the pseudo-Huber regularization term of \eqref{eq:reduced_loss} has a bounded gradient.
  By contrast, by gradient PAC*, as a user $\user$ with preferred model $\honestmodelsub{\user}$ provides more and more honest data $\data{\user}^\dagger$,
  for any $\modelsub{\user}$ too far from the preferred model $\honestmodelsub{\user}$,
  the negative gradient $-\nabla_{\modelsub{\user}} \localloss{} (\modelsub{\user} | \data{\user}^\dagger)$ of the local loss will point more and more towards $\honestmodelsub{\user}$,
  so that it will eventually outweigh the gradient $\nabla_{\modelsub{\user}} \licchaviweight \huber_{\frac{\huberconstant}{\huberfunction{\card{\data{}}}}} \left(\modelsub{\user} - \globalmodel \right)$ of the pseudo-Huber regularization term,
  no matter what value $\globalmodel$ takes.
  This guarantees that, for any value of $\globalmodel^{\licch{}} (\datafamily{})$,
  the optimum $\modelsub{\user}^\licch{} (\datafamily{}) = \optmodelsub{} (\globalmodel^{\licch{}} (\datafamily{}), \data{\user})$ will be close to $\honestmodelsub{\user}$.
\end{proof}

\subsection{Reduction to model attack}

By (strong) local PAC* and by providing enough data $\data{\strategicuser}^\spadesuit$ labeled with $\attackmodel$,
the strategic user $\strategicuser$ can essentially make \licchavi{} learn the model $\modelsub{\strategicuser}^\licch{} \approx \attackmodel$.
Moreover, by providing enough data, they can make the Huber loss essentially equal to an $\ell_1$ loss.
This prompts us to consider the following modified Licchavi loss
\begin{align}
    &\licch{}_{\strategicuser} (\globalmodel | \attackmodel, \datafamily{-\strategicuser})
    \triangleq \licchaviweight \norm{ \attackmodel - \globalmodel }{1} + \reducedloss{} (\globalmodel | \datafamily{-\strategicuser}).
\end{align}
This loss can be easily shown to yield a unique minimum,
which we denote by $\globalmodel^{\licch{}} (\attackmodel, \datafamily{-\strategicuser})$
and $\modelsub{\user}^{\licch{}} (\attackmodel, \datafamily{-\strategicuser})$ for $\user \neq \strategicuser$.
Define also $\modelsub{\strategicuser}^{\licch{}} (\attackmodel, \datafamily{-\strategicuser}) \triangleq \attackmodel$.
The definition of $\strategyproofbound$-strategyproofness under model attack is then akin to the definitions of Section~\ref{sec:definition_strateyproofness}, but with models instead of data, and without any randomness and approximation, which removes the need of $\varepsilon$ and $\delta$.
Typically, for the case of global-targeted $\strategyproofbound$-strategyproofness, the following must hold:
\begin{align}
    \forall \attackmodel, \targetmodel \mathsep
    &\forall \datafamily{-\strategicuser} \mathsep
    \norm{ \globalmodel^{\alg{}} (\targetmodel, \datafamily{-\strategicuser}) - \targetmodel }{2}
    \leq (1+\strategyproofbound) \norm{ \globalmodel^{\alg{}} (\attackmodel, \datafamily{-\strategicuser}) - \targetmodel }{2}.
\end{align}
We can now adapt the equivalence proven by \cite{equivalence_data_gradient} to the case of \licchavi{}'s strategyproofness.

\begin{lemma}
\label{lemma:equivalence_data_model}
Assuming strong local PAC*,
\licchavi{} is global-targeted $\strategyproofbound$-strategyproof under data attack if and only if it is global-targeted $\strategyproofbound$-strategyproof under model attack.
The equivalence also holds for user-targeted $\strategyproofbound$-strategyproofness.
\end{lemma}

\begin{proof}[Sketch of proof]
  On one hand, any data attack $\attackdata$ yields the same outcome as the attack by model $\attackmodel \triangleq \globalmodel^{\licch{}} (\attackdata, \data{-\strategicuser})$.
  On the other hand, by strong local PAC* (Lemma~\ref{lemma:strong_local_pac}), an attack model $\attackmodel$ yields essentially the same result as the dataset $\attackdata$ obtained by randomly a large number of queries and answering them with model $\attackmodel$.
  The precise analysis, given in Appendix~\ref{app:equivalence}, is however nontrivial.
\end{proof}

In light of the lemma, to prove theorems~\ref{th:negative_strategyproofness} and~\ref{th:strategyproof}, it is sufficient to (dis)prove strategyproofness under model attack.

\subsection{Proof sketch of the negative result}

Unfortunately, in general, no $\strategyproofbound$-strategyproofness guarantee holds for \licchavi{}.

\begin{proof}[Sketch of proof]
  Essentially, we construct a nasty instance for $\dimension = 2$, by designing appropriately the other users' reduced loss $\reducedloss{} (\globalmodel | \datafamily{-\strategicuser})$.
  In particular, we make sure that its quadratic approximation near the optimum is associated to a definite positive matrix, whose eigenvalues are very different, and whose eigenvectors are slightly rotated from the canonical basis.
  This proves that, for any multiplicative gain, there are instances where a strategic user can obtain this multiplicative gain, in terms of drawing the global model (or other users' models) closer to their preferred model through data misreporting.
  Appendix~\ref{app:negative-theorem} provides a full construction of this worst case analysis, which is highly nontrivial.
  Note also that the asymptotic strategyproofness analysis will provide deeper insights into the phenomenon at play.
\end{proof}

\subsection{Proof sketch of the positive result}

\begin{proof}[Sketch of proof]
  Our assumptions allow to reduce strategyproofness to the one-dimension case.
  But then, in dimension 1, by behaving strategically, user $\strategicuser$ can only achieve values for $\globalmodel^\licch{}$ within a (possibly unbounded) interval $I$.
  But now, if $\attackmodel < \inf(I)$, then $\globalmodel^\licch{} = \inf(I)$.
  If $\attackmodel > \sup(I)$, then  $\globalmodel^\licch{} = \sup(I)$.
  Finally, if $\attackmodel \in I$, then $\globalmodel^\licch{} = \attackmodel$.
  In any case, the learned value $\globalmodel^\licch{}$ is closest to $\honestmodelsub{\strategicuser}$ when $\attackmodel = \honestmodelsub{\strategicuser}$.
  Similar arguments apply to biasing a target user $\targetuser$'s model $\modelsub{\targetuser}^\licch{}$.
  Appendix~\ref{app:strategyproofness} details the proof.
\end{proof}

\subsection{Approximate strategyproofness in the general case}
\label{sec:approximate_strategyproofness}

In general, unfortunately, local loss functions are not coordinate-wise separable.
Nevertheless, here, we discuss how our strategyproofness theorem can be leveraged to tune \licchavi{} and make it approximately strategyproof.
The main trick is to tune each user $\user$'s coordinate system depending on the sum of other users' reduced loss $\reducedloss{} (\globalmodel | \datafamily{-\strategicuser})$.

More precisely, denote $\hessian{\strategicuser} \triangleq \nabla^2_{|\globalmodel = \globalmodel^{-\strategicuser}} \reducedloss{} (\globalmodel | \datafamily{-\strategicuser})$,
where $\globalmodel^{-\strategicuser}$ is the output of \licchavi{} executed on all users apart from user $\strategicuser$.
Since $\reducedloss{}$ is convex, we know that $\hessian{\strategicuser}$ is semi-definite positive.
Moreover, it is symmetric, thus there exists an orthogonal matrix $\orthogonalmatrix_\strategicuser$
and eigenvalues $\eigenvalue{1}^\strategicuser \geq \ldots \geq \eigenvalue{\dimension}^\strategicuser \geq 0$ such that
$\hessian{\strategicuser} = \orthogonalmatrix_\strategicuser^T \diag(\eigenvalue{1}^\strategicuser, \ldots, \eigenvalue{\dimension}^\strategicuser) \orthogonalmatrix_\strategicuser$.
Then, assuming there are many users, so that the effect of strategic user $\strategicuser$ on the global model is small, and ignoring the additive constants,
the reduced \licchavi{} loss becomes approximately
\begin{align}
    \licch{}_{\strategicuser} (\globalmodel | \attackmodel, \datafamily{-\strategicuser})
    &\approx \licchaviweight \norm{ \attackmodel - \globalmodel }{1}
    + (\globalmodel - \globalmodel^{-\strategicuser})^T \hessian{\strategicuser} (\globalmodel - \globalmodel^{-\strategicuser}) \nonumber \\
    &\approx \licchaviweight \norm{ \attackmodel - \globalmodel }{1}
    + \sum_{\coordinate \in [\dimension]} \eigenvalue{\coordinate} ( (\orthogonalmatrix_\strategicuser \globalmodel)_\coordinate - (\orthogonalmatrix_\strategicuser \globalmodel^{-\strategicuser})_\coordinate )^2.
\end{align}
Now, in general, this loss has no guarantee of strategyproofness.
However, we may now tune \licchavi{} for strategic user $\strategicuser$ based on the orthogonal matrix $\orthogonalmatrix_\strategicuser$ to fall back on the previous case.
To do so, we introduce the following $\orthogonalmatrixfamily{}$-skewed \licchavi{} loss:
\begin{align}
    &\licch{} (\globalmodel, \modelfamily{} | \datafamily{}, \orthogonalmatrixfamily{})
    \triangleq \sum_{\user \in [\USER]} \localloss{\user} (\modelsub{\user} | \data{\user})
    + \licchaviweight\sum_{\user \in [\USER]} \huber_{\frac{\huberconstant}{\huberfunction{\card{\data{\user}}}}} \left(\orthogonalmatrix_\user \modelsub{\user} - \orthogonalmatrix_\user \globalmodel \right).
\end{align}
Indeed, this loss corresponds to the following reduced loss for model attack:
\begin{align*}
    \licch{}_{\strategicuser} (\globalmodel | \attackmodel, \datafamily{-\strategicuser}, \orthogonalmatrixfamily{}) 
    &= \licchaviweight \norm{ \orthogonalmatrix_\strategicuser \attackmodel - \orthogonalmatrix_\strategicuser \globalmodel }{1}
    + \reducedloss{} (\globalmodel | \datafamily{-\strategicuser}, \orthogonalmatrixfamily_{-\strategicuser} )  \nonumber \\
    &\approx \sum_{\coordinate \in [\dimension]} \licchaviweight \absv{ (\orthogonalmatrix_\strategicuser (\attackmodel - \globalmodel))_\coordinate }
    + \eigenvalue{\coordinate} ( (\orthogonalmatrix_\strategicuser \globalmodel)_\coordinate - (\orthogonalmatrix_\strategicuser \globalmodel^{-\strategicuser})_\coordinate )^2,
\end{align*}
assuming $\reducedloss{} (\globalmodel | \datafamily{-\strategicuser}, \orthogonalmatrixfamily_{-\strategicuser} ) \approx \reducedloss{} (\globalmodel | \datafamily{-\strategicuser} )$.
Importantly, this last approximation is coordinate-wise separable, which means that Theorem~\ref{th:strategyproof} would approximately apply here.

Unfortunately, the precise analysis of our approximations is highly nontrivial, and beyond the scope of this paper.
In particular, we leave open the problem of determining how to (efficiently) compute matrices $\orthogonalmatrixfamily{}$
such that the vectors $\orthogonalmatrix_\user^T \canonicalvector{\coordinate}$ are (approximately) eigenvectors of $\nabla^2_{| \globalmodel = \globalmodel^{-\user}} \reducedloss{} (\globalmodel | \datafamily{-\user}, \orthogonalmatrixfamily_{-\user} )$ for all users $\user \in [\USER]$,
where $\canonicalvector{\coordinate}$ is the $\coordinate$-th vector of the canonical basis $\canonicalbasis$.

\section{Asymptotic Strategyproofness}
\label{sec:asymptotic}

In this section, we discuss the strategyproofness of \licchavi{} in the asymptotic setting of a large number of users.
From a practical standpoint, this is arguably the most relevant setting for
it allows us to approximate the loss restricted to other users by a quadratic function, as discussed below.

\subsection{Asymptotic setting}

Let us first define the asymptotic setting, which is inspired from~\cite{geometric_median}.
Intuitively, it corresponds to the limit where $\USER \rightarrow \infty$,
when each user's dataset $\data{\user}$ is drawn independently from a distribution of datasets $\datadistribution$.
This then naturally leads us to the following definition of strategyproofness which, for simplicity, we state in the case of model attack.
By our equivalence lemma (Lemma~\ref{lemma:equivalence_data_model}), it is evidently equivalent to its (more wordy) data attack version.

\begin{definition}
A GPL algorithm \alg{} is asymptotically global-targeted $\strategyproofbound$-strategyproof under distribution $\datadistribution$ if,
for any $\varepsilon, \delta > 0$ and any preferred model $\honestmodel_{\strategicuser}$,
there exists $\USER_0$ such that,
if there are $\USER -1 \geq \USER_0$ users (other than strategic user $\strategicuser$) whose datasets $\datafamily{-\strategicuser}$ are all drawn independently from $\datadistribution$,
then with probability at least $1-\delta$,
we have
\begin{align}
    \forall \attackmodel{} \mathsep &\norm{ \globalmodel^{\alg{}} (\honestmodel_\strategicuser, \datafamily{-\strategicuser}) - \honestmodel_\strategicuser }{2}
    \leq (1+\strategyproofbound) \norm{ \globalmodel^{\alg{}} (\attackmodel{}, \datafamily{-\strategicuser}) - \honestmodel_\strategicuser }{2} + \varepsilon.
\end{align}
\end{definition}

Now, when the number $\USER$ of users is large, the \licchavi{} loss under model attack can be approximated by
\begin{equation}
\label{eq:approximate}
    \licch{} (\globalmodel | \attackmodel{}, \datafamily{-\strategicuser})
    \approx \licchaviweight \norm{ \globalmodel - \attackmodel{} }{1}
    + (\USER - 1) \expectedreducedloss{} (\globalmodel),
\end{equation}
where $\expectedreducedloss{} (\globalmodel) \triangleq \expectVariable{\data{} \leftarrow \datadistribution}{ \reducedloss{} (\globalmodel | \data{}) }$,
with an expectation taken over the random dataset $\data{}$.

Now denote $\defaultglobalmodel
\triangleq
\argmin_{\globalmodel}
\expectedreducedloss{} (\globalmodel)$ the model obtained by ignoring the strategic user.
We also define the achievable set $\achievableset$ as the set of global models that could be obtained through model attack by the strategic user, i.e.
\begin{equation}
    \achievableset (\datafamily{-\strategicuser}) \triangleq \set{ \globalmodel^{\licch{}} (\attackmodel{}, \datafamily{-\strategicuser})  \st \attackmodel{} \in \setR^d }.
\end{equation}
When $\USER$ is large, the strategic user's attack model $\attackmodel{}$ will only have a small effect on the optimal global model.
This means that, for large values of $\USER$, $\achievableset(\datafamily{-\strategicuser})$ gets arbitrarily small.
As a result, over $\achievableset (\datafamily{-\strategicuser})$, and for a large enough number of users, the expected reduced loss $\expectedreducedloss{} (\globalmodel)$ in (\ref{eq:approximate}) can be approximated by a quadratic loss.
More precisely, defining $\defaultglobalmodel$ the minimum of $\expectedreducedloss{}$ and $\hessian{\infty} \triangleq \nabla^2 \expectedreducedloss{} (\defaultglobalmodel)$, we then have
\begin{align}
    &\licch_{\strategicuser} (\globalmodel | \attackmodel{}, \datafamily{-\strategicuser})
    \approx \licchaviweight \norm{ \globalmodel - \attackmodel{} }{1} 
    + (\USER - 1) (\globalmodel - \defaultglobalmodel)^T \hessian{\infty} (\globalmodel - \defaultglobalmodel).
\end{align}
Unfortunately, the precise formulation and derivation of this approximation is highly nontrivial, and left for future work.
Importantly, however, it suggests that we can restrict our attention to this quadratic setting.

\subsection{The quadratic setting}

In light of our discussion above, and without loss of generality in the asymptotic setting,
we now focus on \licchavi{} against a quadratic function, with a unit voting right, i.e.
\begin{equation}
    \licch{} (\globalmodel | \attackmodel{}, \sdpmatrix{}) \triangleq
    \norm{ \globalmodel - \attackmodel{} }{1}
    + \globalmodel^T \sdpmatrix \globalmodel.
\end{equation}

To state our result, we define the crookedness of $\sdpmatrix \succ 0$ by
\begin{equation}
\label{eq:crookedness}
    \crookedness(\sdpmatrix) \triangleq
    \sup_{\varx \in \setR^d}
    \inf_{\underset{\sign(\vary) = \sign(x)}{\vary \in \setR^d}}
    \frac{\norm{\varx}{2} \norm{\sdpmatrix \vary}{2}}{\varx^T \sdpmatrix \vary} -1,
\end{equation}
where $\sign$ applies the sign function on each coordinate (and thus implies $\vary_\coordinate = 0$ whenever $\varx_\coordinate = 0$).
We now have the following theorem.
\begin{theorem}
\label{th:licchavi_quadratic}
\licchavi{} against positive definite matrix $\sdpmatrix$ is $\crookedness (\sdpmatrix)$-strategyproof.
\end{theorem}

\begin{proof}[Sketch of proof]
The proof is nontrivial, as it involves understanding the function $\globalmodel^{\licch{}} (\attackmodel{}, \sdpmatrix)$,
as well as its image for $\attackmodel{} \in \setR^\dimension$, which is the \emph{achievable set}.
Arguments based on orthogonal projection then allow us to lower bound the distance between $\targetmodel{}$ and the achievable set.
The full proof is given in Appendix~\ref{app:asymptotic}.
\end{proof}

Unfortunately, \crookedness{} does not seem to yield a closed form formula.
Nevertheless, we point out that it takes lower values than another measure called \skewness{}, introduced by \cite{geometric_median}.

\begin{proposition}
\label{prop:crookedness_vs_skewness}
Let
    $\skewness (\sdpmatrix) \triangleq
    \sup_{\varx \in \setR^\dimension}
      \frac{\norm{\varx}{2} \norm{\sdpmatrix \varx}{2}}{ \varx^T \sdpmatrix \varx }  - 1$.
Then, for any $\sdpmatrix \succ 0$, we have
    $\crookedness(\sdpmatrix) \leq \skewness(\sdpmatrix)$.
Moreover, there are definite positive matrices $\sdpmatrix \succ 0$ for which the inequality is strict.
\end{proposition}

\begin{proof}[Sketch of proof]
The inequality is obtained by considering $\vary \triangleq \varx$ in Equation~\eqref{eq:crookedness}.
The strict inequality can be shown by considering a matrix $\sdpmatrix$ whose eigenvectors are the canonical basis vectors, and whose eigenvalues differ.
\end{proof}

Since \cite{geometric_median} essentially showed that the geometric median is $\skewness (\sdpmatrix)$-strategyproof,
and that this strategyproofness bound is tight,
our theorem nicely shows that the coordinate-wise median (and variants like \licchavi{}) is essentially \emph{more strategyproof} than the geometric median and its variants.
Intuitively, by forcing agreements to be coordinate-wise, the coordinate-wise median (and variants like \licchavi{}) restricts the vulnerabilities to what happens only along the canonical basis vectors.
In fact, in the specific case where each vector of the canonical basis $\canonicalbasis$ is an eigenvector of $\sdpmatrix$, but with different eigenvalues, then Theorem~\ref{th:strategyproof} actually applies, and \licchavi{} is strategyproof ($\strategyproofbound = 0$).
This is strictly better than what the geometric median guarantees in such a case.

\section{Experimenting \licchavi{}}
\label{sec:experiment}

To test \licchavi{}, we consider a language fine-tuning task,
on a language model with an embedding layer of dimension 256, two GRU with hidden size 200 and a fully connected layer with 10'000 output units (vocab size) using \textit{softmax}, with cross-entropy loss on next token prediction.
This yields $5 \times 10^6$ free parameters, half being in the embedding layer.
A global model was pretrained on a pretraining dataset, and the model's embedding layer was frozen.

We then considered a real Twitter dataset made of
$2 \times 10^{7}$ hydrated tweets during the 2016 USA presidential election from $\USER = 100$ users.
We performed federated fine-tuning of the last layer, with users' tweets, using \licchavi{} (with $\licchaviweight \triangleq 1$ and $\huberconstant \triangleq 10$) and the $\ell_2^2$ baseline~\cite{DinhTN20,HanzelyHHR20},
which we implemented on top of Pytorch.
We used a batch size of 32, 3 epochs per nodes per round, and a learning rate of $10^{-3}$.
The performance was measured on another set of tweets by the $R_3$ measure, which is the average number of times our model contains the correct next word in its top 3 predictions.
The results are displayed in Figure~\ref{fig:huberi_L2^2}.

\begin{figure}[h]
    \vspace{-1mm}
    \centering
    \includegraphics[width=.7\columnwidth]{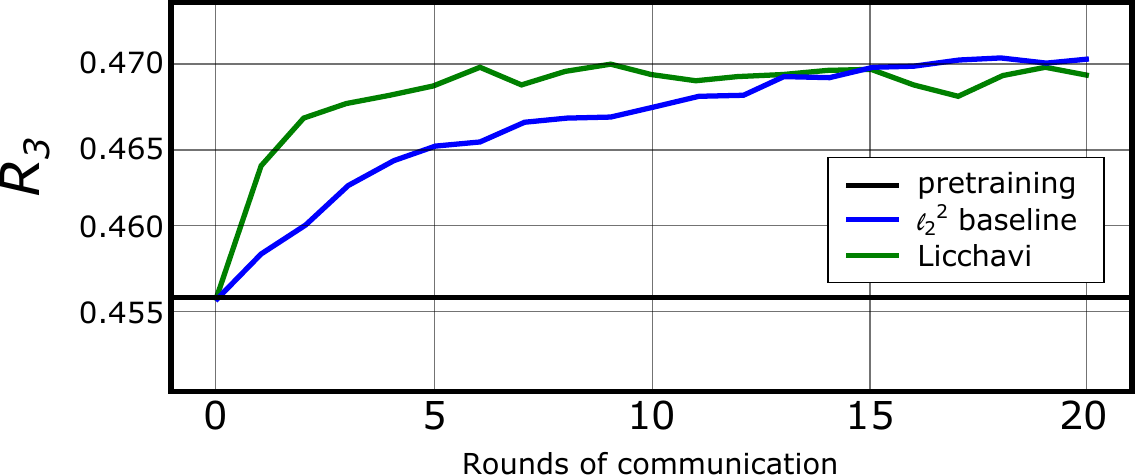}
    \caption{$R_3$ measures during fine-tuning.}
    \label{fig:huberi_L2^2}
    \vspace{-1mm}
\end{figure}

We observe that both \licchavi{} and $\ell_2^2$ fine tuning improve the $R_3$ measure of the global model in a similar way.
This suggests that \licchavi{} can provide similar performances as classical personalized federated learning models, while additionally providing strategyproofness guarantees.

\section{Conclusion}
\label{sec:conclusion}

We introduced \licchavi{}, an algorithm for global and personalized learning, and we analyzed its strategyproofness.
We proved both positive and negative theorems.
Perhaps most importantly, we showed that \licchavi{} yields some asymptotic $\strategyproofbound$-strategyproofness guarantees,
and we sketched how to guarantee approximate strategyproofness in the general setting, by tuning \licchavi{}.
We argue that such considerations are critical to guarantee the trustworthiness of training databases and, eventually, the security of deployed machine learning algorithms.
We also implemented \licchavi{} for language fine tuning, 
and our experiments highlighted its practicality and performance.

\section*{Acknowledgment}

The authors are thankful to Guillaume Le Mailloux for some useful preliminary work on strategyproof learning.

\bibliographystyle{alpha}
\bibliography{references}

\newpage
\begin{center}
{\huge Appendix}
\end{center}

\appendix
\section{Existence and uniqueness of the optimum}
\label{sec:existence_uniqueness_proof}

\begin{repproposition}{prop:existence_uniqueness}
 For any family of datasets $\datafamily{}$, $\licch{}$ yields a unique minimum.
\end{repproposition}

\begin{proof}
Let us first prove the existence of the minimum. Define $L_0 \triangleq \licch{}(0,0|\datafamily{})$, the value of the the  \licchavi{} loss at $0$. Since the local loss functions are strongly convex, there exists a constant $c$ such that if for any $\user \in [\USER]$ we have $\norm{\modelsub{n}}{2}\geq c$, then $\localloss{} (\modelsub{\user} | \data{\user}) \geq L_0$. This implies that at the infimum, we must have $\norm{\modelsub{n}}{2}\leq c$, for all $\user \in [\USER]$. On the other hand, if $\norm{\globalmodel}{2} \rightarrow{} \infty $ then $\norm{\modelsub{n}}{2}\leq c$ implies that $\huber_{\frac{\huberconstant}{\huberfunction{\card{\data{\user}}}}} \left(\modelsub{\user} - \globalmodel \right)$ goes to infinity and in particular becumes larger than $L_0$ for $\norm{\globalmodel}{2}$ large enough.  Therefore, the infinum of \licch{} must be reached in a bounded and close region around the origin which is a compact set. The infimum is thus a minimum, which proves the existence of a minimum. 

We now move on to proving the uniqueness. 
Consider two minima $(\globalmodel^{(1)},\modelfamily{}^{(1)})$ and $(\globalmodel^{(2)},\modelfamily{}^{(2)})$.
By the strict convexity of $\localloss{}$ we have 
\begin{equation}
\label{eq:local_inequality}
    \forall \user \in [\USER] \mathsep  \localloss{} \left( \frac{\modelsub{\user}^{(1)}+\modelsub{\user}^{(2)} }{2} \st \data{\user} \right) 
    \leq \frac{1}{2}\left(\localloss{} (\modelsub{\user}^{(1)} | \data{\user})
    +\localloss{} (\modelsub{\user}^{(2)} | \data{\user}) \right),
\end{equation}
with strict inequality if $\modelsub{\user}^{(1)} \neq \modelsub{\user}^{(2)}$. Similarly, by the strict convexity of \huber{}, for all $\user \in [\USER]$, we obtain
\begin{equation}
\label{eq:huber_inequality}
    \huber_{\frac{\huberconstant}{\huberfunction{\card{\data{\user}}}}} \left(\frac{(\modelsub{\user}^{(1)} - \globalmodel^{(1)}) + (\modelsub{\user}^{(2)} - \globalmodel^{(2)})}{2}\right)  \leq \frac{1}{2}\left(\huber_{\frac{\huberconstant}{\huberfunction{\card{\data{\user}}}}} \left(\modelsub{\user}^{(1)} - \globalmodel^{(1)} \right)+\huber_{\frac{\huberconstant}{\huberfunction{\card{\data{\user}}}}} \left(\modelsub{\user}^{(2)} - \globalmodel^{(2)} \right) \right),
\end{equation}
with a strict inequality $\modelsub{\user}^{(1)} - \globalmodel^{(1)} \neq \modelsub{\user}^{(2)} - \globalmodel^{(2)}$. 
Now combining all of the above inequalities yields
\begin{equation}
    \licch{} \left(\frac{\globalmodel^{(1)}+\globalmodel^{(2)}}{2}, \frac{\modelfamily{}^{(1)}+\modelfamily{}^{(2)}}{2} \st \datafamily{}\right)
    \leq \frac{1}{2}\left(
            \licch{} \left(\globalmodel^{(1)}, \modelfamily{}^{(1)} \st \datafamily{}\right)
            +\licch{} \left(\globalmodel^{(2)}, \modelfamily{}^{(2)} | \datafamily{} \right)
    \right),
\end{equation}
and the above inequality becomes strict if at least one of the inequalities in (\ref{eq:local_inequality}) or (\ref{eq:huber_inequality}) are strict. 
But since, by optimality of the solutions, the right-hand side takes the minimum value of \licch{}, we must have equality.
This implies that $\modelsub{\user}^{(1)} = \modelsub{\user}^{(2)}$
and $\modelsub{\user}^{(1)} - \globalmodel^{(1)} = \modelsub{\user}^{(2)} - \globalmodel^{(2)}$  for all users $\user \in [\USER]$.
Considering any user, say $\user = 1$, in the second equality then implies $\globalmodel^{(1)} = \globalmodel^{(2)}$.
All in all, we thus have uniqueness.
\end{proof}

\section{Reductions to model attacks}
\label{app:strategyproof_proof}

\subsection{Reduced losses}

\begin{replemma}{lemma:local_optimum}
For any data $\data{}$ and $\globalmodel$, the infimum problem defining $\reducedloss{}$ yields a unique minimum $\optmodelsub{} (\globalmodel, \data{})$.
\end{replemma}

\begin{proof}
  Given that the local loss $\localloss{}$ is strongly convex and that the pseudo-Huber loss is convex, we know that their sum is strongly convex, which guarantees the existence and uniqueness of $\optmodelsub{} (\globalmodel, \data{})$.
\end{proof}

\begin{replemma}{lemma:gradient_reduced_loss}
For any data $\data{}$, the reduced loss $\reducedloss{} (\globalmodel | \data{})$ is convex and differentiable.
Moreover, $\nabla \reducedloss{} = \licchaviweight \nabla \huber_{\frac{\huberconstant}{\huberfunction{\card{\data{}}}}} \left( \globalmodel - \optmodelsub{} (\globalmodel, \data{}) \right)$, and thus $\norm{\nabla \reducedloss{}}{\infty} \leq \licchaviweight$.
\end{replemma}

\begin{proof}
  The convexity and differentiability of the reduced loss follows straightforwardly from Lemma~9 of \cite{equivalence_data_gradient}.
  By the same lemma, we have $\nabla \reducedloss{} = \licchaviweight \nabla \huber_{\frac{\huberconstant}{\huberfunction{\card{\data{}}}}} \left( \globalmodel - \optmodelsub{} (\globalmodel, \data{}) \right)$.
  In particular, $\partial_{\globalmodel_\coordinate} \reducedloss{} = \licchaviweight{} \frac{\globalmodel_\coordinate - \optmodelsub{\coordinate}}{\sqrt{ \frac{\huberconstant^2}{1+\card{\data{}}} + \left( \globalmodel_\coordinate - \optmodelsub{\coordinate} \right)^2 }}$, 
  whose absolute value is at most $\licchaviweight{}$.
\end{proof}

\begin{replemma}{lemma:local_global_optimum}
$\globalmodel^{\licch{}} (\datafamily{})$ is the unique minimum of $\reducedloss{} (\globalmodel | \datafamily{})$, 
while $\modelsub{\user}^{\licch{}} (\datafamily{}) = \optmodelsub{} (\globalmodel^{\licch{}} (\datafamily{}), \data{\user})$.
\end{replemma}

\begin{proof}
  Clearly, we have
  \begin{equation}
      \inf_{\globalmodel, \modelfamily} \licch{} (\globalmodel, \modelfamily | \datafamily{})
      = \inf_{\globalmodel} \set{ \inf_{\modelfamily} \licch{} (\globalmodel, \modelfamily | \datafamily{}) }
      = \inf_{\globalmodel} \reducedloss{} (\globalmodel | \datafamily{}).
  \end{equation}
  This shows that $\globalmodel$ minimizes \licch{} (with some value of $\modelfamily$) if and only if it minimizes the reduced loss $\reducedloss{}$.
  Since the former has a unique minimum, so does the latter, which is $\globalmodel^{\licch{}} (\datafamily{})$.
  Moreover, similar computations clearly show that 
  \begin{equation}
        \licch{} \left( \globalmodel^{\licch{}} (\datafamily{}), \modelfamily^{\licch{}} (\datafamily{}) \st \datafamily{} \right) 
        = \licch{} \left( \globalmodel^{\licch{}} (\datafamily{}), \optmodelfamily (\globalmodel^{\licch{}} (\datafamily{}), \data{\user}) \st \datafamily{} \right).
  \end{equation}
  By the uniqueness of the minimum, we then conclude that $\modelfamily^{\licch{}} (\datafamily{}) = \optmodelfamily (\globalmodel^{\licch{}} (\datafamily{}), \data{\user})$.
  Or, put differently, for each user $\user$, we have $\modelsub{\user}^{\licch{}} (\datafamily{}) = \optmodelsub{\user} (\globalmodel^{\licch{}} (\datafamily{}), \data{\user})$.
\end{proof}

\subsection{Strong local PAC*}

In this section, to prove Lemma~\ref{lemma:strong_local_pac}, we prove an even stronger result, 
which asserts that, assuming user $\user$ provides enough data, then, given any global model, $\optmodelsub{\user}$ is successfully probably approximately correct.
This result will be useful in the proof of Lemma~\ref{lemma:equivalence_data_model}.

\begin{lemma}
\label{lemma:general_strong_local_pac}
  Assume gradient PAC* local losses. 
  Then, for any model $\honestmodelsub{\user}$ and any $\varepsilon, \delta > 0$, there exists $\USERINPUT{}$ such that,
  if user $\user$ provides a dataset $\data{\user}^\dagger$ with at least $\USERINPUT{}$ answers to random queries with model $\honestmodelsub{\user}$, 
  then, with probability at least $1-\delta$, we have
  \begin{equation}
      \forall \globalmodel \in \setR^\dimension \mathsep 
      \norm{ \optmodelsub{\user} \left(\globalmodel, \data{\user}^\dagger\right) - \honestmodelsub{\user} }{2} \leq \varepsilon.
  \end{equation}
\end{lemma}

\begin{proof}
  Consider a user $\user \in [\USER]$ and their preferred model $\honestmodelsub{\user}$. Fix $\varepsilon, \delta >0$. 
  Define $K \triangleq \norm{\honestmodelsub{\user}}{2}$.
  Denote $\USERINPUT{}$ the number of data points provided by  user $\user$.
  By the optimality of $\optmodelsub{\user} \triangleq \optmodelsub{\user} \left(\globalmodel, \data{\user}^\dagger\right)$, we have
\begin{align}
    0&\in 
    (\optmodelsub{\user} - \honestmodelsub{\user})^T \nabla  \localloss{} (\optmodelsub{\user} | \data{\user}^\dagger) 
    + (\optmodelsub{\user} - \honestmodelsub{\user})^T \nabla_{\modelsub{\user}} \left( \licchaviweight \huber_{\frac{\huberconstant}{\huberfunction{\card{\data{\user}^\dagger}}}} \left(\optmodelsub{\user} - \optglobalmodel \right) \right) \\
    &\geq (\optmodelsub{\user} - \honestmodelsub{\user})^T \nabla  \localloss{} (\optmodelsub{\user} | \data{\user}^\dagger) 
    - \norm{\optmodelsub{\user} - \honestmodelsub{\user}}{2} \norm{\nabla_{\modelsub{\user}} \left( \licchaviweight \huber_{\frac{\huberconstant}{\huberfunction{\card{\data{\user}^\dagger}}}} \left(\optmodelsub{\user} - \optglobalmodel \right) \right)}{2} \\
    &\geq (\optmodelsub{\user} - \honestmodelsub{\user})^T \nabla  \localloss{} (\optmodelsub{\user} | \data{\user}^\dagger)  - \licchaviweight \sqrt{d} \norm{\optmodelsub{\user} - \honestmodelsub{\user}}{2},
    \label{eq:condition_pac_strong}
\end{align}
where, in the last line, we used the fact the infinite norm of the gradient $\huber$ is bounded by $1$ and $\norm{\model}{2}\leq \sqrt{d} \norm{\model}{\infty}$.

Now, gradient PAC* implies the existence of an event $\mathcal E$ that occurs with probability at least $\Probability(K, \USERINPUT{})$, under which we have

\begin{equation}
    0\geq A_K \USERINPUT{} \min \left\lbrace \norm{\optmodelsub{\user} - \honestmodelsub{\user}}{2}, \norm{\optmodelsub{\user} - \honestmodelsub{\user}}{2}^2 \right\rbrace 
    - B_K \USERINPUT{}^\alpha \norm{\optmodelsub{\user} - \honestmodelsub{\user}}{2} - \licchaviweight \sqrt{d} \norm{\optmodelsub{\user} - \honestmodelsub{\user}}{2}.
\end{equation}
Note that the event $\event$ is independent from $\globalmodel$.
If $\norm{\optmodelsub{\user} - \honestmodelsub{\user}}{2}  \geq 1$, this implies
\begin{equation}
    0 \geq (A_K \USERINPUT{}  - B_K \USERINPUT{}^\alpha - \licchaviweight \sqrt{d})
    \norm{\optmodelsub{\user} - \honestmodelsub{\user}}{2},
\end{equation}
which cannot hold for $\USERINPUT{} > \USERINPUT{1}\triangleq\max \set{2\licchaviweight \sqrt{d}/A_K,(2B_K/A_K)^\frac{1}{1-\alpha}}$. Thus, for $\USERINPUT{}> \USERINPUT{1}$, we have

\begin{equation}
    0\geq A_K \USERINPUT{}  \norm{\optmodelsub{\user} - \honestmodelsub{\user}}{2}^2 
    - (B_K \USERINPUT{}^\alpha + \licchaviweight \sqrt{d} ) \norm{\optmodelsub{\user} - \honestmodelsub{\user}}{2}.
\end{equation}
As a result, 
\begin{equation}
  \norm{\optmodelsub{\user} - \honestmodelsub{\user}}{2} \leq \frac{B_K \USERINPUT{}^\alpha + \licchaviweight \sqrt{d}}{A_{K} \USERINPUT{}}.
\end{equation}
Considering $\USERINPUT{}$ large enough such that $\USERINPUT{} > \USERINPUT{1}$ and $\Probability(K, \USERINPUT{}) \geq 1-\delta$ and $\frac{B_K \USERINPUT{}^\alpha + \licchaviweight \sqrt{d}}{A_{K} \USERINPUT{}}\leq \varepsilon$, we obtain the result.
\end{proof}

Lemma~\ref{lemma:strong_local_pac} then follows straightforwardly.

\begin{replemma}{lemma:strong_local_pac}
  For gradient PAC* local losses, \licchavi{} is strongly local PAC*.
\end{replemma}

\begin{proof}
  This follows from Lemma~\ref{lemma:general_strong_local_pac}, 
  and the fact that $\modelsub{\user}^\licch (\datafamily{}) = \optmodelsub{\user} (\globalmodel^\licch (\datafamily{}), \data{\user})$ (Lemma~\ref{lemma:local_global_optimum}).
\end{proof}

\subsection{Equivalence between data attack and model attack}
\label{app:equivalence}

Our equivalence proof will leverage the following lemma, largely drawn from~\cite{equivalence_data_gradient}.

\begin{lemma}[Lemma 1 from \cite{equivalence_data_gradient}]
\label{lemma:reduction_model_to_data}
  Consider any data $\datafamily{}$ and any user $\strategicuser \in [\USER]$.
  Then having user $\strategicuser$ reporting $\data{\strategicuser}$ is equivalent to having them reporting the model $\modelsub{\strategicuser}^\licch{} (\datafamily{})$, i.e.
  \begin{equation}
    \globalmodel^\licch{} (\data{\strategicuser}, \datafamily{-\strategicuser}) = \globalmodel^\licch{} (\modelsub{\strategicuser}^\licch{} (\datafamily{}), \datafamily{-\strategicuser} )
    \qquad \text{and} \qquad 
    \forall \user \mathsep \modelsub{\user}^\licch{} (\data{\strategicuser}, \datafamily{-\strategicuser}) = \modelsub{\user}^\licch{} (\modelsub{\strategicuser}^\licch{} (\datafamily{}), \datafamily{-\strategicuser} ).
  \end{equation}
\end{lemma}

\begin{proof}[Sketch of proof]
  This is derived from the fact that the loss as a function of $\globalmodel$ and $\modelfamily_{-\strategicuser}$ is unchanged.
\end{proof}

We will also need the following lemma, adapted from Lemma 2 of \cite{equivalence_data_gradient} (or, rather, by from its generalization, which is Lemma 14 in \cite{equivalence_data_gradient}).
However, a bit more work is needed to adapt their proof, as, here, we need to transform a pseudo-Huber loss into an $\ell_1$ loss.
We bound this transformation by the following uniform bound.

\begin{lemma}
\label{lemma:pseudo_huber_uniform_convergence}
For any $\delta > 0$ and $t \in \setR$, we have $0 \leq \sqrt{\delta^2 +t^2} - \absv{t} \leq \delta$.
\end{lemma}

\begin{proof}
  Clearly, $\delta^2 + t^2 \geq t^2$, which implies $\sqrt{\delta^2 + t^2} \geq \absv{t}$, and thus $\sqrt{\delta^2 +t^2} - \absv{t} \geq 0$.
  Moreover, we have $(\sqrt{\delta^2 +t^2} - \sqrt{t^2})^2 = \delta^2 + t^2 - 2 \sqrt{t^2 \delta^2 + t^4} + t^2 \leq \delta^2 + 2 t^2 - 2\sqrt{t^4} = \delta^2$,
  using the inequality $\sqrt{t^2 \delta^2 + t^4} \geq \sqrt{t^4}$.
  Taking the square root yields the lemma.
\end{proof}

\begin{lemma}
We have $0 \leq \huber_\delta (x) - \norm{x}{1} \leq \delta \dimension$.
\end{lemma}

\begin{proof}
  By the previous lemma, on each coordinate $\coordinate$, we have $0 \leq \huber_\delta (x_\coordinate) - \absv{x_\coordinate}{1} \leq \delta \dimension$.
  Adding up all the coordinates yields the lemma.
\end{proof}

\begin{lemma}
\label{lemma:uniform_strong_pac}
  Assume strong local PAC* learning.
  Consider a user $\strategicuser \in [\USER]$, any model $\targetmodel$ and fix other users' datasets $\datafamily{-\strategicuser}$.
  For any $\varepsilon, \delta > 0$,
  there exists $\USERINPUT{}$ such that,
  if user $\strategicuser$ provides a dataset $\data{\strategicuser}^\dagger$ by answering at least $\USERINPUT{}$ random queries with model $\targetmodel$,
  then with probability at least $1-\delta$, we have
  \begin{equation}
    \norm{ \globalmodel^\licch{} (\targetmodel, \datafamily{-\strategicuser}) - \globalmodel^\licch{} (\data{\strategicuser}^\dagger, \datafamily{-\strategicuser}) }{2} \leq \varepsilon
    \quad \text{and} \quad 
    \forall \user \mathsep 
    \norm{ \modelsub{\user}^\licch{} (\targetmodel, \datafamily{-\strategicuser}) - \modelsub{\user}^\licch{} (\data{\strategicuser}^\dagger, \datafamily{-\strategicuser}) }{2} \leq \varepsilon.
  \end{equation}
\end{lemma}

\begin{proof}
  Define the compact set $C$ of models that are $\varepsilon$-close to $\globalmodel^\licch{} \triangleq \globalmodel^\licch (\targetmodel, \datafamily{-\strategicuser})$ and $\modelfamily_{-\strategicuser}^\licch{} \triangleq \modelfamily_{-\strategicuser}^\licch (\targetmodel, \datafamily{-\strategicuser})$, i.e.
  \begin{equation}
      C \triangleq \set{ (\globalmodel, \modelfamily) \st 
        \norm{\globalmodel - \globalmodel^\licch (\targetmodel, \datafamily{-\strategicuser})}{2} \leq \varepsilon 
        ~\text{and}~ 
        \forall \user \neq \strategicuser \mathsep  \norm{\modelsub{\user} - \modelsub{\user}^\licch (\targetmodel, \datafamily{-\strategicuser})}{2} \leq \varepsilon
      }.
  \end{equation}
  Denote $D \triangleq \overline{\setR^{\dimension \times (1+\USER)} - C}$ the closure of the complement of $C$.
  By the same arguments as Proposition~\ref{prop:existence_uniqueness}, we know that $\licch{} (\globalmodel, \modelfamily_{-\strategicuser} | \targetmodel, \datafamily{-\strategicuser})$ yields a minimum over $D$.
  But by the uniqueness of the minimum, we know that 
  \begin{equation}
      \eta \triangleq 
      \inf_{(\globalmodel, \modelfamily_{-\strategicuser}) \in D} \licch{} (\globalmodel, \modelfamily_{-\strategicuser} | \targetmodel, \datafamily{-\strategicuser})
      - \licch{} (\globalmodel^\licch{}, \modelfamily_{-\strategicuser}^\licch{} | \targetmodel, \datafamily{-\strategicuser}) > 0.
  \end{equation}
  Thus, for any $(\globalmodel, \modelfamily) \in D$, we have
  $\licch{} (\globalmodel, \modelfamily | \targetmodel, \datafamily{-\strategicuser}) \geq \eta + \licch{} (\globalmodel^\licch{}, \modelfamily^\licch{} | \targetmodel, \datafamily{-\strategicuser})$.
  We now invoke strong local PAC* learning.
  More precisely, consider the event
  \begin{equation}
      \event \triangleq \set{
        \forall \globalmodel \in \setR^\dimension \mathsep
        \norm{ \optmodelsub{\strategicuser} (\globalmodel, \data{\strategicuser}^\dagger) - \targetmodel }{2} \leq \min \set{\varepsilon, \eta / 6 \licchaviweight}
      }.
  \end{equation}
  By Lemma~\ref{lemma:general_strong_local_pac}, we know that there exists $\USERINPUT{1}$ such that, 
  if user $\strategicuser$ provides a dataset $\data{\strategicuser}^\dagger$ at least $\USERINPUT{1}$ answers to random queries, 
  then the event $\event$ occurs with probability at least $1-\delta$.
  Now consider $\USERINPUT{} \triangleq \max \set{\USERINPUT{1}, 9 \licchaviweight^2 \dimension^2 \huberconstant^2 / \eta^2}$.
  We now assume that the dataset $\data{\strategicuser}^\dagger$ contains at least $\USERINPUT{}$ answers to random queries.
  Then $\event$ still occurs with probability at least $1-\delta$.
  By optimality of $\modelsub{\strategicuser}^\licch{} (\globalmodel, \data{\strategicuser}^\dagger)$,  under $\event$, we then have 
  \begin{equation}
      \forall \modelsub{\strategicuser}, \globalmodel \in \setR^\dimension \mathsep 
      \localloss{} (\optmodelsub{\strategicuser} (\globalmodel, \data{\strategicuser}^\dagger) | \data{\strategicuser}^\dagger) 
      + \licchaviweight \huber_{\frac{\huberconstant}{\sqrt{1+ \card{\data{\strategicuser}^\dagger}}} } (\globalmodel - \optmodelsub{\strategicuser} (\globalmodel, \data{\strategicuser}^\dagger))
      \leq 
      \localloss{} (\modelsub{\strategicuser} | \data{\strategicuser}^\dagger) 
      + \licchaviweight \huber_{\frac{\huberconstant}{\sqrt{1+ \card{\data{\strategicuser}^\dagger}}} } (\globalmodel - \modelsub{\strategicuser})
  \end{equation}
  Given Lemma~\ref{lemma:local_global_optimum}, applying this inequality to $\globalmodel \triangleq \globalmodel^\licch{}$ and $\modelsub{\strategicuser} \triangleq \optmodelsub{\strategicuser}(\globalmodel, \data{\strategicuser}^\dagger)$ then yields
  \begin{equation}
      \localloss{} (\optmodelsub{\strategicuser}(\globalmodel, \data{\strategicuser}^\dagger) | \data{\strategicuser}^\dagger) 
      \geq 
      \localloss{} (\modelsub{\strategicuser}^\licch{} | \data{\strategicuser}^\dagger) 
      + \licchaviweight \huber_{\frac{\huberconstant}{\sqrt{1+ \card{\data{\strategicuser}^\dagger}}} } (\globalmodel^\licch - \modelsub{\strategicuser}^\licch{})
      - \licchaviweight \huber_{\frac{\huberconstant}{\sqrt{1+ \card{\data{\strategicuser}^\dagger}}} } (\globalmodel^\licch{} - \optmodelsub{\strategicuser}(\globalmodel, \data{\strategicuser}^\dagger))
  \end{equation}
  Then, for any models $(\globalmodel, \modelfamily_{-\strategicuser}) \in D$ and $\modelsub{\strategicuser}$, 
  under $\event$, we then have
  \begin{align}
      &\licch{} ( \globalmodel, \modelfamily | \data{\strategicuser}^\dagger, \datafamily{-\strategicuser} )
      \geq \licch{} ( \globalmodel, \modelfamily_{-\strategicuser}, \optmodelsub{\strategicuser}(\globalmodel, \data{\strategicuser}^\dagger) | \data{\strategicuser}^\dagger, \datafamily{-\strategicuser} ) \\
      &= \left( \sum_{\user \neq \strategicuser} \localloss{} (\modelsub{\user} | \data{\user}) + \licchaviweight \huber_{\frac{\huberconstant}{\sqrt{1+ \card{\data{\user}}}} } (\globalmodel - \modelsub{\user}) \right)
      + \localloss{} (\optmodelsub{\strategicuser}(\globalmodel, \data{\strategicuser}^\dagger) | \data{\strategicuser}^\dagger) 
      + \licchaviweight \huber_{\frac{\huberconstant}{\sqrt{1+ \card{\data{\strategicuser}^\dagger}}} } \left(\globalmodel - \optmodelsub{\strategicuser}(\globalmodel, \data{\strategicuser}^\dagger) \right) \\
      &\geq \left( \licch{} (\globalmodel, \modelfamily_{-\strategicuser} | \targetmodel, \datafamily{-\strategicuser})
      - \licchaviweight \norm{ \globalmodel - \targetmodel }{1} \right) \nonumber \\
            &\qquad \qquad + \left( \localloss{} (\modelsub{\strategicuser}^\licch{} | \data{\strategicuser}^\dagger) 
                            + \licchaviweight \huber_{\frac{\huberconstant}{\sqrt{1+ \card{\data{\strategicuser}^\dagger}}} } (\globalmodel^\licch{} - \modelsub{\strategicuser}^\licch{})
                            - \licchaviweight \huber_{\frac{\huberconstant}{\sqrt{1+ \card{\data{\strategicuser}^\dagger}}} } (\globalmodel^\licch{} - \optmodelsub{\strategicuser}(\globalmodel, \data{\strategicuser}^\dagger)) \right) \nonumber  \\
            &\qquad \qquad + \licchaviweight \huber_{\frac{\huberconstant}{\sqrt{1+ \card{\data{\strategicuser}^\dagger}}} } \left(\globalmodel - \optmodelsub{\strategicuser}(\globalmodel, \data{\strategicuser}^\dagger) \right) \\
      &\geq \eta
      + \licch{} (\globalmodel^\licch{}, \modelfamily_{-\strategicuser}^\licch{} | \targetmodel, \datafamily{-\strategicuser})
      + \localloss{} ( \modelsub{\strategicuser}^\licch{} | \data{\strategicuser}^\dagger)
      + \licchaviweight \huber_{\frac{\huberconstant}{\sqrt{1+ \card{\data{\strategicuser}^\dagger}}} } (\globalmodel^\licch{} - \modelsub{\strategicuser}^\licch{}) 
      - \licchaviweight \norm{ \globalmodel - \targetmodel }{1} \nonumber \\
            &\qquad \qquad + \licchaviweight 
                \huber_{\frac{\huberconstant}{\sqrt{1+ \card{\data{\strategicuser}^\dagger}}} } (\globalmodel - \optmodelsub{\strategicuser}(\globalmodel, \data{\strategicuser}^\dagger)) 
            - \licchaviweight \huber_{\frac{\huberconstant}{\sqrt{1+ \card{\data{\strategicuser}^\dagger}}} } \left(\globalmodel^\licch{} - \optmodelsub{\strategicuser}(\globalmodel, \data{\strategicuser}^\dagger) \right) \\
      &\geq \eta +
      \licch{} (\globalmodel^\licch{}, \modelfamily^\licch{} | \data{\strategicuser}^\dagger, \data{-\strategicuser})
      + \licchaviweight \norm{ \globalmodel^\licch{} - \targetmodel }{1}
      - \licchaviweight \norm{ \globalmodel - \targetmodel }{1} \nonumber \\
            &\qquad \qquad + \licchaviweight \norm{\globalmodel - \optmodelsub{\strategicuser}(\globalmodel, \data{\strategicuser}^\dagger)}{1} 
                - \left( \licchaviweight \norm{\globalmodel^\licch{} - \optmodelsub{\strategicuser}(\globalmodel, \data{\strategicuser}^\dagger)}{1} + \frac{\licchaviweight \huberconstant \dimension}{\sqrt{1+ \card{\data{\strategicuser}^\dagger}}} \right) \\
      &\geq \eta +
      \licch{} (\globalmodel^\licch{}, \modelfamily^\licch{} | \data{\strategicuser}^\dagger, \data{-\strategicuser})
      - \frac{\licchaviweight \dimension \huberconstant}{\sqrt{1 + \USERINPUT{} }} \nonumber \\
            &\qquad \qquad - \licchaviweight \absv{ \norm{ \globalmodel^\licch{} - \targetmodel }{1}  - \norm{\globalmodel^\licch{} - \optmodelsub{\strategicuser}(\globalmodel, \data{\strategicuser}^\dagger)}{1} } 
                            - \licchaviweight \absv{ \norm{\globalmodel - \optmodelsub{\strategicuser}(\globalmodel, \data{\strategicuser}^\dagger)}{1} - \norm{\globalmodel - \targetmodel}{1} } \\
      &\geq \licch{} (\globalmodel^\licch{}, \modelfamily^\licch{} | \data{\strategicuser}^\dagger, \data{-\strategicuser})
      + \eta - \frac{\licchaviweight \dimension \huberconstant}{\sqrt{1 + \frac{9 \licchaviweight^2 \dimension^2 \huberconstant^2}{\eta^2} }} - 2 \licchaviweight \norm{\targetmodel - \optmodelsub{\strategicuser}(\globalmodel, \data{\strategicuser}^\dagger)}{1} \\
      &\geq \licch{} (\globalmodel^\licch{}, \modelfamily^\licch{} | \data{\strategicuser}^\dagger, \data{-\strategicuser})
      + \eta - \frac{\eta}{3} - \frac{2\eta}{6}
      = \licch{} (\globalmodel^\licch{}, \modelfamily^\licch{} | \data{\strategicuser}^\dagger, \data{-\strategicuser})
      + \frac{\eta}{3} \\
      &> \licch{} (\globalmodel^\licch{}, \modelfamily^\licch{} | \data{\strategicuser}^\dagger, \data{-\strategicuser}).
  \end{align}
This proves that any $(\globalmodel, \modelfamily_{-\strategicuser}) \in D$ cannot be the unique minimum of \licchavi{} given datasets $(\data{\strategicuser}^\dagger, \data{-\strategicuser})$.
Thus $(\globalmodel^\licch{} (\data{\strategicuser}^\dagger, \data{-\strategicuser}), \modelfamily_{-\strategicuser}^\licch{} (\data{\strategicuser}^\dagger, \data{-\strategicuser})) \in C$.
Adding to this the guarantee of event $\event$ yields the lemma.
\end{proof}

\begin{replemma}{lemma:equivalence_data_model}
Assuming strong local PAC*, \licchavi{} is global-targeted $\strategyproofbound$-strategyproof under data attack if and only if it is global-targeted $\strategyproofbound$-strategyproof under model attack.
The equivalence also holds for user-targeted $\strategyproofbound$-strategyproofness.
\end{replemma}

\begin{proof}
  Let us first assume that \licchavi{} is global-targeted $\strategyproofbound$-strategyproof under model attack.
  We then fix $\varepsilon, \delta > 0$,
  and we consider the event $\event$ defined by
  \begin{equation}
    \event \triangleq \set{ 
        \forall \datafamily{-\strategicuser} \mathsep
        \norm{ 
            \globalmodel^\licch{} (\data{\strategicuser}^\dagger, \datafamily{-\strategicuser}) 
            - \globalmodel^\licch{} (\modelsub{\strategicuser}^\dagger, \datafamily{-\strategicuser})
        }{2} \leq \frac{\varepsilon}{1+\strategyproofbound}
    }.
  \end{equation}
  Note that $\event$ is random because it depends on the random honest dataset $\data{\strategicuser}^\dagger$, whose random queries are answered with model $\targetmodel$.
  Given strong local PAC*, we know that there is $\USERINPUT{}$ large enough such that $\probability{\event} \geq 1 - \delta$.
  Assume $\event$.
  Now fix other users' datasets $\datafamily{-\strategicuser}$,
  and consider any strategic dataset $\data{\strategicuser}^\spadesuit$ that $\strategicuser$ could inject.
  By Lemma~\ref{lemma:reduction_model_to_data}, we know that there exists $\attackmodel \triangleq \modelsub{\strategicuser}^\licch (\data{\strategicuser}^\spadesuit, \datafamily{-\strategicuser})$ such that $\globalmodel^\licch{} (\attackmodel, \datafamily{-\strategicuser}) = \globalmodel^\licch{} (\data{\strategicuser}^\spadesuit, \datafamily{-\strategicuser})$.
  Then
  \begin{align}
      \norm{ \globalmodel^\licch{} (\data{\strategicuser}^\spadesuit, \datafamily{-\strategicuser}) - \targetmodel }{2} 
      &\leq \norm{ \globalmodel^\licch{} (\attackmodel, \datafamily{-\strategicuser}) - \targetmodel }{2} \\ 
      &\leq (1+\strategyproofbound) \norm{ \globalmodel^\licch{} (\targetmodel, \datafamily{-\strategicuser}) - \targetmodel }{2} \\ 
      &\leq (1+\strategyproofbound) \left(
        \norm{ \globalmodel^\licch{} (\targetmodel, \datafamily{-\strategicuser}) - \globalmodel^\licch{} (\data{\strategicuser}^\dagger, \datafamily{-\strategicuser}) }{2}
        + \norm{ \globalmodel^\licch{} (\data{\strategicuser}^\dagger, \datafamily{-\strategicuser}) - \targetmodel }{2}
      \right) \\ 
      &\leq \varepsilon + (1+\strategyproofbound) \norm{ \globalmodel^\licch{} (\data{\strategicuser}^\dagger, \datafamily{-\strategicuser}) - \targetmodel }{2},
  \end{align}
  which proves $\strategyproofbound$-strategyproofness under data attack.

  Reciprocally, assume that \licchavi{} is global-targeted $\strategyproofbound$-strategyproof under data attack.
  Fix any target model $\targetmodel$, attack model $\attackmodel$ and any $\varepsilon > 0$.
  We then define the following events, which depend on the datasets $\data{\strategicuser}^\dagger$ and $\data{\strategicuser}^\spadesuit$, 
  whose random queries are answered respectively with models $\targetmodel$ and $\attackmodel$:
  \begin{align}
      \event_1 &\triangleq \set{ 
            \forall \data{\strategicuser}, \data{-\strategicuser} \mathsep 
            \norm{ \globalmodel^\licch{} (\data{\strategicuser}^\dagger, \datafamily{-\strategicuser}) - \targetmodel }{2}
            \leq (1+\strategyproofbound) \norm{ \globalmodel^\licch{} (\data{\strategicuser}, \datafamily{-\strategicuser}) - \targetmodel }{2} + \varepsilon 
        }, \\
      \event_2 &\triangleq \set{ 
            \forall \data{-\strategicuser} \mathsep 
            \norm{ \globalmodel^\licch{} (\data{\strategicuser}^\dagger, \datafamily{-\strategicuser}) - \globalmodel^\licch{} (\targetmodel, \datafamily{-\strategicuser}) }{2} \leq \varepsilon
        }, \\
      \event_3 &\triangleq \set{ 
            \forall \data{-\strategicuser} \mathsep 
            \norm{ \globalmodel^\licch{} (\data{\strategicuser}^\spadesuit, \datafamily{-\strategicuser}) - \globalmodel^\licch{} (\attackmodel, \datafamily{-\strategicuser}) }{2} \leq \varepsilon
        }.
  \end{align}
  By $\strategyproofbound$-strategyproofness under data attack, we know that, when the datasets answer sufficiently many queries, $\event_1$ occurs with probability at least $3/4$.
  By Lemma~\ref{lemma:uniform_strong_pac}, we also know that, when the datasets answer sufficiently many queries, each of events $\event_2$ and $\event_3$ also occurs with probability at least $3/4$.
  As a result, we know that, when the datasets answer sufficiently many queries, the intersection $\event_1 \cap \event_2 \cap \event_3$ occurs with probability at least $1/4$.
  Under $\event_1 \cap \event_2 \cap \event_3$, we then have
  \begin{align}
    \forall \data{-\strategicuser} \mathsep 
    &\norm{ \globalmodel^\licch{} (\targetmodel, \datafamily{-\strategicuser}) - \targetmodel }{2}
    \leq \norm{ \globalmodel^\licch{} (\targetmodel, \datafamily{-\strategicuser}) - \globalmodel^\licch{} (\data{\strategicuser}^\dagger, \datafamily{-\strategicuser}) }{2} 
        + \norm{ \globalmodel^\licch{} (\data{\strategicuser}^\dagger, \datafamily{-\strategicuser}) - \targetmodel }{2} \\
    &\leq \varepsilon + (1+\strategyproofbound) \norm{ \globalmodel^\licch{} (\data{\strategicuser}^\spadesuit, \datafamily{-\strategicuser}) - \targetmodel }{2} \\
    &\leq \varepsilon + (1+\strategyproofbound) \left( 
        \norm{ \globalmodel^\licch{} (\data{\strategicuser}^\spadesuit, \datafamily{-\strategicuser}) - \globalmodel^\licch{} (\attackmodel, \datafamily{-\strategicuser}) }{2} 
        + \norm{ \globalmodel^\licch{} (\attackmodel, \datafamily{-\strategicuser}) - \targetmodel }{2} 
    \right) \\
    &\leq \varepsilon + (1+\strategyproofbound) \varepsilon + (1+\strategyproofbound) \norm{ \globalmodel^\licch{} (\attackmodel, \datafamily{-\strategicuser}) - \targetmodel }{2} .
  \end{align}
But this event is deterministic. Since it occurs with a positive probability, it must thus hold with probability 1.
We conclude by noting that it holds for any $\varepsilon > 0$.
Taking the limit $\varepsilon \rightarrow 0$ proves global-targeted $\strategyproofbound$-strategyproofness under model attack.
  
  The proof for user-targeted $\strategyproofbound$-strategyproofness is essentially the same.
\end{proof}

\section{Proof of non-strategyproofness}
\label{app:negative-theorem}

To prove Theorem~\ref{th:negative_strategyproofness}, we propose a counter example, which will be parametrized by $A > 1$ (and we will consider the limit $A \rightarrow \infty$.)

\subsection{The counter example}

Namely, consider $\dimension = 2$, $\USER = 3$ and $\licchaviweight = 1$.
Now assume that users 1 and 2 are honest, and provide the same dataset $\data{} = \data{1} = \data{2}$ of at least $A^4/\huberconstant^2$ inputs, and for which
\begin{equation}
    \localloss{} (\model | \data{}) 
    = \frac{A^2}{2} (A \modelsub{1} - \modelsub{2})^2 + \frac{1}{2} \modelsub{2} + \frac{1}{2A^2} (\modelsub{1}^2 + \modelsub{2}^2).
\end{equation}
It is clear that this loss is strongly convex and differentiable, and thus satisfies the assumptions of the paper.
Moreover, intuitively, it locks $\model^*(\globalmodel)$ essentially along the line $A \modelsub{1} = \modelsub{2}$, 
while favoring lower values of $\modelsub{2}$ along this line, at least while $\modelsub{2} \geq - A^2$.

Moreover, since the loss looks the same from user 1 and user 2's perspectives, and by uniqueness of the minimum,
we know that, for any model attack $\attackmodel$ by strategic user $\strategicuser \triangleq 3$, 
we will have $\modelsub{1}^\licch (\attackmodel, \datafamily{-\strategicuser}) = \modelsub{2}^\licch (\attackmodel, \datafamily{-\strategicuser})$.
Thus, without loss of generality, we assume that both users are always assigned the same model $\model$.
In particular, denoting $\nu \triangleq \huberconstant / \sqrt{1 + \card{\data{}}} \leq 1/A^2$, 
and assuming strategic user $\strategicuser$ reports model $\model^\clubsuit$ (with $\clubsuit \in \set{\dagger, \spadesuit}$),
we can consider the following modified \licchavi{} loss (we leave the dependence on $\data{}$ implicit):
\begin{equation}
    \licch{} (\globalmodel, \model | \model^\clubsuit) 
    \triangleq 2 \localloss{} (\model | \data{}) + 2 \huber_\nu (\globalmodel - \model) + \norm{\globalmodel - \model^\clubsuit}{1}.
\end{equation}
Indeed, it is immediate to verify that the minimum $\globalmodel^\licch, \model^\licch$ of this loss 
will coincide with the \licchavi{} computation, 
i.e., $\globalmodel^\licch = \globalmodel^\licch (\model^\clubsuit, \datafamily{-\strategicuser})$
and $\model^\licch = \modelsub{\user}^\licch (\model^\clubsuit, \datafamily{-\strategicuser})$ for $\user \in \set{1,2}$.

We consider the target model $\model^\dagger \triangleq (0, 1)$, and the attack model $\model^\spadesuit \triangleq (1/A, 1)$.
We will show that the strategic user can get both $\globalmodel^\licch$ and $\model^\licch$ much closer to $\model^\dagger$, by reporting $\model^\spadesuit$ rather than $\model^\dagger$.
More precisely, we will prove that $\norm{\globalmodel^\licch (\model^\dagger) - \model^\dagger}{2} = \Omega(1)$ as $A \rightarrow \infty$,
while $\norm{\globalmodel^\licch (\model^\spadesuit) - \model^\dagger}{2} = \mathcal O(1/A)$.
This will prove Theorem~\ref{th:negative_strategyproofness}.

\subsection{Bounding the optimal global model}

In this section, we prove that $\globalmodel^\licch \approx \model^\licch$.
In fact, we will prove that for any fixed value of $\model$, if we optimize $\globalmodel$, then the distance between $\model$ and the optimized value $\globalmodel^*$ will be at most $1/A^2$.
Intuitively, this should not be surprising; indeed since the honest users 1 and 2 form a majority, they should be deciding where $\globalmodel^\licch$ is.
To prove this, denote $\unitvector{} \triangleq \nabla_\globalmodel \huber_\nu (\globalmodel - \model) \in (-1,1)^2$.
The partial derivatives with respect to the global model, given strategic user $\strategicuser$'s reported model $\model^\clubsuit$, are then given by
\begin{align}
    \partial_{\globalmodel_1} \licch{} &= 2 \unitvector{1} + \sign (\globalmodel_1 - \model^\clubsuit_1), 
    \label{eq:partial_global_licch1} \\
    \partial_{\globalmodel_1} \licch{} &= 2 \unitvector{2} + \sign (\globalmodel_2 - \model^\clubsuit_2).
    \label{eq:partial_global_licch2}
\end{align}

\begin{lemma}
For $\coordinate \in [\dimension] = \set{1,2}$, 
either $\globalmodel_\coordinate^\licch = \modelsub{\coordinate}^\clubsuit$ 
or $\globalmodel_\coordinate^\licch - \model_\coordinate^\licch = \sign (\globalmodel_\coordinate - \model^\clubsuit_\coordinate) \nu$.
\end{lemma}

\begin{proof}
If $\globalmodel_\coordinate^\licch \neq \modelsub{\coordinate}^\clubsuit$, then $\sign (\globalmodel_\coordinate - \model^\clubsuit_\coordinate) \in \set{-1,1}$.
By the optimality condition, we know that equations~\eqref{eq:partial_global_licch1} and~\eqref{eq:partial_global_licch2} must equal zero.
This implies that $\unitvector{\coordinate}^\licch{} = - \frac{1}{2} \sign (\globalmodel_\coordinate - \model^\clubsuit_\coordinate)$.
Solving this yields the lemma.
\end{proof}

Denote $\optglobalmodel(\model, \model^\clubsuit)$ the optimal value of $\globalmodel$ when $\model$ is fixed, and given the strategic user's reported model $\model^\clubsuit$.

\begin{lemma}
\label{lemma:bounded_optglobal_given_model}
$\norm{\optglobalmodel(\model, \model^\clubsuit) - \model}{\infty} \leq 1/A^2$.
\end{lemma}

\begin{proof}
By the optimality condition on $\optglobalmodel(\model)$, we know that for each coordinate $\coordinate$, 
we must have $0 \in \partial_{\globalmodel_\coordinate} \licch{} = 2 \unitvector{\coordinate} + \sign (\globalmodel_\coordinate - \modelsub{\coordinate}^\clubsuit)$.
Since $\sign (\globalmodel_\coordinate - \modelsub{\coordinate}^\clubsuit) \subset [-1,1]$, 
there must thus exist $\kappa_\coordinate \in [-1,1]$ such that $2 \unitvector{\coordinate} + \kappa_\coordinate = 0$,
which implies that $\unitvector{\coordinate} = - \frac{1}{2} \kappa_\coordinate \in [-1/2, 1/2]$.
Thus in particular $\absv{\unitvector{\coordinate}} = \frac{\absv{\globalmodel_\coordinate - \model_\coordinate}}{\sqrt{\nu^2 + (\globalmodel_\coordinate - \model_\coordinate)^2}} \leq 1/2$.
This implies that, at the optimum, $\left( 1 + \frac{\nu^2}{(\optglobalmodel_\coordinate (\model, \model^\clubsuit) - \model_\coordinate)^2} \right) \leq 1/2$,
which can only occur if $\absv{\optglobalmodel_\coordinate (\model, \model^\clubsuit) - \model_\coordinate} \leq \nu \leq 1/A^2$.
This is the lemma.
\end{proof}

\begin{lemma}
$\huber_\nu (\optglobalmodel(\model) - \model) \leq 2 \sqrt{2} / A^2$.
\end{lemma}

\begin{proof}
This follows straightforwardly from the previous lemma.
\end{proof}

\subsection{Model reduced loss}

The previous lemmas prompt us to consider the following model-reduced loss
\begin{equation}
    \modelreducedloss{} (\model | \model^\clubsuit) 
    \triangleq \inf_{\globalmodel} \licch{} (\globalmodel, \model | \model^\clubsuit)
    = \licch{} (\optglobalmodel(\model), \model | \model^\clubsuit).
\end{equation}
Note that we can write $\modelreducedloss{} (\model | \model^\clubsuit)
    = \modelreducedloss{simple} (\model | \model^\clubsuit)
    + \errorfunction{} (\model | \model^\clubsuit)$,
where $\modelreducedloss{simple} (\model | \model^\clubsuit) \triangleq 2 \localloss{} (\model | \data{}) + \norm{\model - \model^\clubsuit}{1}$ is what we will call the \emph{simplified model reduced loss},
and where $\errorfunction{}$ is the error function due to model reduced loss simplification, given by
\begin{equation}
    \errorfunction{} (\model | \model^\clubsuit) 
    \triangleq 2 \huber_\nu (\optglobalmodel(\model, \model^\clubsuit) - \model) 
    + \norm{\optglobalmodel(\model, \model^\clubsuit) - \model^\clubsuit}{1}
    - \norm{\model - \model^\clubsuit}{1}.
\end{equation}
Interestingly, the error function is uniformly small, so that we can essentially know $\modelreducedloss{}$ by only studying $\modelreducedloss{simple}$.

\begin{lemma}
For any $\model, \model^\clubsuit$, we have $\absv{\errorfunction{} (\model | \model^\clubsuit)} \leq 7/A^2$.
\end{lemma}

\begin{proof}
By triangle inequality, we have
\begin{align}
    \absv{\errorfunction{} (\model | \model^\clubsuit)}
    &\leq 2 \huber_\nu (\optglobalmodel(\model, \model^\clubsuit) - \model) 
    + \absv{\norm{\optglobalmodel(\model, \model^\clubsuit) - \model^\clubsuit}{1}
    - \norm{\model - \model^\clubsuit}{1}} \\
    &\leq 2 \frac{2 \sqrt{2}}{A^2} + \norm{\optglobalmodel(\model, \model^\clubsuit) - \model}{1}
    \leq \frac{4 \sqrt{2}}{A^2} + 2 \norm{\optglobalmodel(\model, \model^\clubsuit) - \model}{\infty} \\
    &\leq \frac{4 \sqrt{2} + 2}{A^2} \leq 7/A^2,
\end{align}
where we used the two previous lemmas.
\end{proof}

Given the lemma, we can provide the following bounds on interesting values of the reduced loss $\modelreducedloss{}$:
\begin{align}
    \modelreducedloss{} (0|\model^\dagger) 
    &= \modelreducedloss{simple} (0|\model^\dagger) + \errorfunction{} (0|\model^\dagger) \leq 1 + 7 A^{-2}, \\
    \modelreducedloss{} (\model^\spadesuit|\model^\spadesuit) 
    &= \modelreducedloss{simple} (\model^\spadesuit|\model^\spadesuit) + \errorfunction{} (\model^\spadesuit|\model^\spadesuit)
    \leq 1 + \frac{A^{-2} + 1}{A^2} + 7 A^{-2} \leq 1 + 9 A^{-2},
\end{align}
using $A > 1$.
In particular, if we can guarantee that $\modelreducedloss{} (\model|\model^\dagger) > 1 + 9 A^{-2}$ for $\model$ in some regions of space, then we can exclude the possibility that $\model^\licch (\model^\dagger)$ belongs there.

\subsection{The optimal model is bounded along the second coordinate}

\begin{lemma}
\label{lemma:bounded_coordinate2}
Consider any $\model^\clubsuit$ and suppose $A \geq \norm{\model^\clubsuit}{1} + 7$. 
Then $\absv{ \modelsub{2}^\licch{} (\model^\clubsuit) } \leq 2 A^2$.
\end{lemma}

\begin{proof}
First note that 
\begin{equation}
    \modelreducedloss{} ( 0 | \model^\clubsuit) 
    = \norm{\model^\clubsuit}{1} + \errorfunction{} (0 | \model^\clubsuit)
    \leq \norm{\model^\clubsuit}{1} + 7 A^{-2}
    \leq \norm{\model^\clubsuit}{1} + 7,
\end{equation}
using $A > 1$.
Now assume that $\absv{\modelsub{2}} \geq 2 A^2$, and consider any $\modelsub{1} \in \setR$.
Then
\begin{align}
    \modelreducedloss{} (\model | \model^\clubsuit)
    &\geq \modelsub{2} + \frac{1}{A^2} \modelsub{2}^2
    \geq \frac{1}{A^2} \absv{\modelsub{2}}^2 - \absv{\modelsub{2}}
    \geq 4 A^2 - 2 A^2 = 2 A^2 > A \geq \norm{\model^\clubsuit}{1} + 7,
\end{align}
using $A > 1$.
Thus $\modelreducedloss{} (\model | \model^\clubsuit) > \modelreducedloss{} ( 0 | \model^\clubsuit)$,
which implies that $\model$ cannot be optimal.
Thus we must have $\absv{ \modelsub{2}^\licch{} (\model^\clubsuit) } \leq 2 A^2$.
\end{proof}

\subsection{Further model reduced loss}

Now, interestingly, the simplified reduced loss $\modelreducedloss{simple}$ has a simple closed form, which allows us to study it directly.
In particular, given a a fixed value of $\modelsub{2}$, 
the parameter $\modelsub{1}$ is easily optimized with respect to $\modelreducedloss{simple}$.
Indeed, note that 
\begin{equation}
    \partial_{\modelsub{1}} \modelreducedloss{simple} 
    = 2 A^3 (A \modelsub{1} - \modelsub{2}) + \frac{2}{A^2} \modelsub{1} + \sign (\modelsub{1} - \modelsub{1}^\clubsuit).
\end{equation}
Thus, defining $\modelsub{1}^*(\modelsub{2}, \modelsub{1}^\clubsuit) \triangleq \argmin_{\modelsub{1}} \modelreducedloss{simple} (\model | \model^\clubsuit)$,
we must have $\left(2 A^4 + \frac{2}{A^2} \right) \modelsub{1}^*(\modelsub{2}, \modelsub{1}^\clubsuit)
    \in 2 A^3 \modelsub{2} - \sign (\modelsub{1} - \modelsub{1}^\clubsuit)$,
which then implies
\begin{equation}
    \modelsub{1}^*(\modelsub{2}, \modelsub{1}^\clubsuit) 
    \in \frac{\modelsub{2}}{A + A^{-5}} - \frac{\sign (\modelsub{1} - \modelsub{1}^\clubsuit)}{2 A^4 + 2 A^{-2}}
\end{equation}
Define $\errorfunction{2} (\modelsub{2} | \modelsub{1}^\clubsuit) \triangleq \modelsub{1}^*(\modelsub{2}, \modelsub{1}^\clubsuit) - A^{-1} \modelsub{2}$ the error when estimating $\modelsub{1}^*$ with $A^{-1} \modelsub{2}$, we then have the following bound on this error function.

\begin{lemma}
For all $\modelsub{2}, \modelsub{1}^\clubsuit$, 
we have $\absv{\errorfunction{2} (\modelsub{2} | \modelsub{1}^\clubsuit)} \leq A^{-6} \absv{\modelsub{2}} + A^{-4}$.
\end{lemma}

\begin{proof}
Indeed, we have
\begin{align}
    \absv{ \errorfunction{2} (\modelsub{2} | \modelsub{1}^\clubsuit) }
    &\leq \absv{ \frac{\modelsub{2}}{A} - \frac{\modelsub{2}}{A + A^{-5}} } + \frac{1}{2A^4 +2A^{-5}} 
    \leq \frac{A^{-5} \absv{\modelsub{2}}}{2 A} + \frac{1}{2A^4}
    \leq A^{-6} \absv{\modelsub{2}} + A^{-4},
\end{align}
which is the lemma.
\end{proof}

\begin{lemma}
Assume $A \geq 9$. 
Then $\absv{\errorfunction{2} (\modelsub{2}^\licch | \modelsub{1}^\dagger)} \leq 3 A^{-4}$ 
and $\absv{\errorfunction{2} (\modelsub{2}^\licch | \modelsub{1}^\spadesuit)} \leq 3 A^{-4}$.
\end{lemma}

\begin{proof}
Note that $\norm{\model^\dagger}{1} \leq 2$ and $\norm{\model^\spadesuit}{1} \leq 2$ (using $A > 1$).
Thus for $A \geq 9$, Lemma~\ref{lemma:bounded_coordinate2} applies to $\model^\clubsuit = \model^\dagger$ and $\model^\clubsuit = \model^\spadesuit$.
Combining this with the previous lemma yields the new lemma.
\end{proof}

Put differently, any point $(\modelsub{1}^*(\modelsub{2}, \modelsub{1}^\clubsuit), \modelsub{2})$ can hardly deviate from the line $\modelsub{2} = A \modelsub{1}$ along the first coordinate, especially as $A \rightarrow \infty$.
Now define the further model reduced loss 
$\modelreducedloss{2} (\modelsub{2} | \model^\clubsuit) \triangleq \modelreducedloss{} (\modelsub{1}^*(\modelsub{2}, \modelsub{1}^\clubsuit), \modelsub{2} | \model^\clubsuit)$, which now only depends on the scalar $\modelsub{2}$.

\begin{lemma}
\label{lemma:lower_bound_reduced_loss}
For $\absv{\modelsub{2}} \leq 2 A^2$ and $\clubsuit \in \set{\dagger, \spadesuit}$,
we have $\modelreducedloss{2} (\modelsub{2} | \model^\clubsuit) \geq 1 + \absv{A^{-1} \modelsub{2} - \modelsub{1}^\clubsuit} - 10 A^{-2}$.
\end{lemma}

\begin{proof}
Indeed, we then have
\begin{align}
    \modelreducedloss{2} (\modelsub{2} | \model^\clubsuit)
    &= \modelreducedloss{} ( A^{-1} \modelsub{2} + \errorfunction{2} (\modelsub{2} | \modelsub{1}^\clubsuit), \modelsub{2} | \model^\clubsuit) \\
    &\geq \modelreducedloss{simple} ( A^{-1} \modelsub{2} + \errorfunction{2} (\modelsub{2} | \modelsub{1}^\clubsuit), \modelsub{2} | \model^\clubsuit) - 7 A^{-2} \\
    &\geq \modelsub{2} + \absv{A^{-1} \modelsub{2} - \modelsub{1}^\clubsuit} - \absv{\errorfunction{2} (\modelsub{2} | \modelsub{1}^\clubsuit)} + \absv{\modelsub{2} - \modelsub{2}^\clubsuit} - 7 A^{-2} \\
    &\geq 1 + \absv{A^{-1} \modelsub{2} - \modelsub{1}^\clubsuit} - 10 A^{-2},
\end{align}
using the inequality $\modelsub{2} + \absv{\modelsub{2}^\clubsuit - \modelsub{2}} \geq \modelsub{2} + \modelsub{2}^\clubsuit - \modelsub{2} = \modelsub{2}^\clubsuit = 1$, for $\clubsuit \in \set{\dagger, \spadesuit}$.
\end{proof}

\subsection{Weakness of honest model report}

We now consider the case of an honest model report $\model^\clubsuit = \model^\dagger = (0, 1)$, 
and show that $\model^\licch{} (\model^\dagger)$ must then be at a distance $\Omega(1)$ from $\model^\dagger$, as $A \rightarrow \infty$.

\begin{lemma}
For $A \geq 40$, $\modelsub{2}^\licch{} (\model^\dagger) \leq 1/2$.
\end{lemma}

\begin{proof}
By contradiction, consider $\modelsub{2} \geq 1/2$ and $\absv{\modelsub{2}} \leq 2 A^2$. 
By Lemma~\ref{lemma:lower_bound_reduced_loss}, we then have
$\modelreducedloss{2} (\modelsub{2} | \model^\dagger) \geq 1 + \absv{A^{-1} \modelsub{2}} - 10 A^{-2} \geq 1 + A^{-1}/2 - 10 A^{-1} / 40 \geq 1 + A^{-1}/4$.
We now use the fact that $A \geq 40 > 36$, thus $A/4 > 9$.
Multiplying both sides by $A^{-2}$ then yields $A^{-1}/ 4 > 9 A^{-2}$.
Therefore $\modelreducedloss{2} (\modelsub{2} | \model^\dagger) > 1 + 9 A^{-2} \geq \modelreducedloss{2} (\modelsub{2}^\licch | \model^\dagger)$.
Thus $\model$ cannot be optimal if $\modelsub{2} \geq 1/2$ and $\absv{\modelsub{2}} \leq 2 A^2$.
Since, by Lemma~\ref{lemma:bounded_coordinate2}, we already know that it cannot be optimal with $\absv{\modelsub{2}} \geq 2 A^2$,
we conclude that we must have $\modelsub{2}^\licch (\model^\dagger) \leq 1/2$.
\end{proof}

\begin{lemma}
For $A \geq 40$, $\norm{ \model^\licch{} (\model^\dagger) - \model^\dagger }{2} \geq 1/4$ and $\norm{ \globalmodel^\licch{} (\model^\dagger) - \model^\dagger }{2} \geq 1/4$.
\end{lemma}

\begin{proof}
By the previous lemma, we know that $\norm{\model^\licch (\model^\dagger) - \model^\dagger}{2} \geq \absv{\modelsub{2}^\licch (\model^\dagger) - \modelsub{2}^\dagger} \geq 1/2 \geq 1/4$.
By triangle inequality, and using Lemma~\ref{lemma:bounded_optglobal_given_model}, 
we then have $\norm{ \globalmodel^\licch{} (\model^\dagger) - \model^\dagger }{2} \geq \norm{ \globalmodel^\licch{} (\model^\dagger) - \model^\licch (\model^\dagger) }{2} - \norm{ \model^\licch (\model^\dagger) - \model^\dagger }{2} \geq 1/2 - \sqrt{2} \norm{ \model^\licch (\model^\dagger) - \model^\dagger }{\infty} \geq 1/2 - \sqrt{2} / A^2 \geq 1/4$, for $A \geq 40$.
\end{proof}

\subsection{Effectiveness of strategic model report}

We now consider the case where strategic user $\strategicuser$ reports $\model^\spadesuit = (1/A, 1)$,
and prove that in this case, $\model^\licch{} (\model^\spadesuit)$ is at a distance $\mathcal O(1/A)$ from $\model^\dagger = (0,1)$, when $A \rightarrow \infty$.

\begin{lemma}
For $A \geq 9$, $\absv{\modelsub{2}^\licch (\model^\spadesuit) - 1} \leq 20 A^{-1}$.
\end{lemma}

\begin{proof}
By Lemma~\ref{lemma:bounded_coordinate2}, we know that $\modelsub{2}^\licch  (\model^\spadesuit)$ must have an absolute value at most $2A^2$.
Now consider $\modelsub{2}$ such that $\absv {\modelsub{2}} \leq 2A^2$ and for which $\absv{\modelsub{2} - 1} \geq 20 A^{-1}$.
By Lemma~\ref{lemma:lower_bound_reduced_loss}, we then have
\begin{align}
    \modelreducedloss{2} (\modelsub{2} | \model^\clubsuit)
    \geq 1 + 20 A^{-2} - 10 A^{-2} \geq 1+10 A^{-2} > 1+9 A^{-2},
\end{align}
and thus $\modelsub{2}$ cannot be optimal.
Hence the lemma.
\end{proof}

\begin{lemma}
For $A \geq 40$, $\norm{ \model^\licch{} (\model^\spadesuit) - \model^\dagger }{2} \geq 35 A^{-1}$ and $\norm{ \globalmodel^\licch{} (\model^\spadesuit) - \model^\dagger }{2} \geq 35 A^{-1}$.
\end{lemma}

\begin{proof}
By the previous lemma, we know that $\absv{\modelsub{2}^\licch (\model^\spadesuit) - 1} \leq 20 A^{-1} \leq 1$, using $A \geq 40$.
Thus $\absv{\modelsub{2}^\licch} \leq 2$.
As a result, $\absv{\modelsub{1}^\licch - \modelsub{1}^\spadesuit} 
= \absv{A^{-1} \modelsub{2}^\licch - A^{-1} + \errorfunction{2} (\modelsub{2}^\licch | \modelsub{1}^\spadesuit) }
\leq A^{-1} \absv{\modelsub{2}^\licch - 1} + \absv{\errorfunction{2} (\modelsub{2}^\licch | \modelsub{1}^\spadesuit)}
\leq 20 A^{-2} + 2 A^{-6} + A^{-4} \leq 23 A^{-2}$, using $A > 1$.
We then have $\norm{\model^\licch - \model^\dagger}{2}^2 = \absv{\modelsub{1}^\licch}^2 + \absv{\modelsub{2}^\licch - 1}^2
\leq \left(A^{-1} + \absv{\modelsub{1}^\licch - \modelsub{1}^\spadesuit}\right)^2 + 400 A^{-2} 
\leq (24 A^{-1})^2 + 400 A^{-2} = 976 A^{-2} \leq (32 A^{-1})^2 \leq (35 A^{-1})^2$.
Finally, we invoke Lemma~\ref{lemma:bounded_optglobal_given_model}, which
yields $\norm{\globalmodel^\licch{}(\model^\spadesuit) - \model^\dagger}{2} \leq \norm{\globalmodel^\licch{}(\model^\spadesuit) - \model^\licch{} (\model^\spadesuit)}{2} + \norm{\model^\licch{} (\model^\spadesuit) - \model^\dagger}{2} \leq 33 A^{-1} + \sqrt{2} A^{-2} \leq 35 A^{-1}$.
\end{proof}

\subsection{Combining it all}

\begin{reptheorem}{th:negative_strategyproofness}
  For any $\strategyproofbound > 0$, \licchavi{} is neither global-targeted $\strategyproofbound$-strategyproof nor user-targeted $\strategyproofbound$-strategyproof.
\end{reptheorem}

\begin{proof}
  Our previous lemmas show that, when $A \geq 40$, by reporting $\model^\spadesuit$ rather than $\model^\dagger$, strategic user $\strategicuser$ gains a factor $A/140$, both in biasing other users' models $\model^\licch{}$ and in biasing the global model $\globalmodel^\licch{}$, 
  as
  \begin{equation}
      \norm{\globalmodel^\licch{} (\model^\dagger) - \model^\dagger }{2}
      \geq 1/4
      > \left( 1+ \frac{A}{140} \right) 35 A^{-1} 
      \geq \left( 1+ \frac{A}{140} \right) \norm{\globalmodel^\licch{} (\model^\spadesuit) - \model^\dagger }{2},
  \end{equation}
  and similarly $\norm{\model^\licch{} (\model^\dagger) - \model^\dagger }{2} > \left( 1+ \frac{A}{140} \right) \norm{\model^\licch{} (\model^\spadesuit) - \model^\dagger }{2}$.
  Therefore, for any $A \geq 40$, \licchavi{} fails to be global-targeted $(A/140)$-strategyproof; and it also fails to be user-targeted $(A/140)$-strategyproof.
  Given any $\strategyproofbound > 0$, taking $A \triangleq \max \set{40, 140 \strategyproofbound}$ proves the theorem.
\end{proof}

\section{Proof of strategyproofness}
\label{app:strategyproofness}

In this section, we now prove Theorem~\ref{th:strategyproof}, namely, the strategyproofness of \licchavi{} for gradient PAC* and coordinate-wise separable local losses.

\subsection{Disentangling the coordinates}

In this section, we show how the assumption of coordinate-wise separable local loss functions allows to reduce the study of strategyproofness to one-dimensional functions.
Namely, recall that the local losses are coordinate-wise separable if $\localloss{} (\model | \data{}) = \sum_{i \in [\dimension]} \localloss{i} (\model_i | \data{})$.
We can then define the coordinate-wise \licchavi{} loss function along dimension $\coordinate$ by
\begin{align}
  \licch_{\strategicuser \coordinate} (\globalmodel_\coordinate | \modelsub{\strategicuser \coordinate}^\spadesuit, \datafamily{-\strategicuser})
  \triangleq \sum_{\user \neq \strategicuser} \localloss{\coordinate} (\modelsub{\user \coordinate} | \data{\user})
  + \licchaviweight \absv{\globalmodel_\coordinate - \modelsub{\strategicuser \coordinate}^\spadesuit}.
\end{align}
The global loss function is then the sum of the coordinate-wise loss functions, i.e.,
\begin{equation}
  \licch_{\strategicuser} (\globalmodel | \attackmodel, \datafamily{-\strategicuser}) 
  = \sum_{\coordinate \in [\dimension]} \licch_{\strategicuser \coordinate} (\globalmodel_\coordinate | \modelsub{\strategicuser \coordinate}^\spadesuit, \datafamily{-\strategicuser}).
\end{equation}
From this, we derive trivially the following lemma.

\begin{lemma}
  $\globalmodel_\coordinate^{\licch} (\attackmodel, \datafamily{-\strategicuser})$ minimizes $\licch_{\strategicuser \coordinate} (\globalmodel_\coordinate | \modelsub{\strategicuser \coordinate}^\spadesuit, \datafamily{-\strategicuser})$.
\end{lemma}

\begin{proof}
  This is straightforward.
\end{proof}

\subsubsection{Strategyproofness in dimension 1}

In particular, this means that the strategic user $\strategicuser$ can focus on coordinate-wise attacks.

\begin{lemma}
  \label{lemma:dimension_1_minimizer}
    Consider a strictly convex function $f : \setR \rightarrow \setR_+$, and denote $\optglobalmodel(\attackmodel) = \argmin_{\globalmodel \in \setR} \set{f(\globalmodel) + \absv{\globalmodel - \attackmodel}}$. 
    Then there exists $\optglobalmodel_{min}, \optglobalmodel_{max} \in \setR \cup \set{-\infty,+\infty}$, with $\optglobalmodel_{min} \leq \optglobalmodel_{max}$, such that, 
    $\optglobalmodel((-\infty,\optglobalmodel_{min}]) = \set{\optglobalmodel_{min}}$, 
    $\optglobalmodel((\optglobalmodel_{max}, +\infty]) = \set{\optglobalmodel_{max}}$
    and $\optglobalmodel(\attackmodel) = \attackmodel$ for all $\attackmodel \in [\optglobalmodel_{min}, \optglobalmodel_{max}]$.
\end{lemma}

\begin{proof}
Denote $F(\globalmodel, \attackmodel) \triangleq f(\globalmodel) + \absv{\globalmodel - \attackmodel}$.
First, let us verify that $\optglobalmodel(\cdot)$ is well-defined, by showing that, for all $\attackmodel$, $F(\cdot, \attackmodel)$ has a unique minimum.
For $\absv{\globalmodel - \attackmodel} \geq f(\attackmodel)$, we then have $F(\globalmodel,\attackmodel) \geq f(\globalmodel) + \absv{\globalmodel - \attackmodel} \geq f(\attackmodel)$.
Thus, the infinum of $F$ on $\setR$ is its infinum on $[\attackmodel - f(\attackmodel), \attackmodel + f(\attackmodel)]$, which is a compact set.
Thus the infinum is reached by a minimum $\optglobalmodel(\attackmodel) \in [\attackmodel - f(\attackmodel), \attackmodel + f(\attackmodel)]$.
The uniqueness of $\optglobalmodel(\attackmodel)$ is then guaranteed by the strict convexity of $f$, which implies that of $F$.

Let us now show that $\optglobalmodel$ must be nondecreasing.
Since $f$ is strictly convex, its subgradients $\partial f$ are nondecreasing.
The same holds for $\sign(\cdot) = \partial \absv{\cdot}$.
Now assume $\modelsub{\strategicuser 1}^\spadesuit \leq \modelsub{\strategicuser 2}^\spadesuit$.
Then, $0 \in \partial_1 F(\optglobalmodel (\modelsub{\strategicuser 1}^\spadesuit), \modelsub{\strategicuser 1}^\spadesuit) = \partial f(\optglobalmodel (\modelsub{\strategicuser 1}^\spadesuit)) + \sign(\optglobalmodel (\modelsub{\strategicuser 1}^\spadesuit) - \modelsub{\strategicuser 1}^\spadesuit) 
\geq \partial f(\optglobalmodel (\modelsub{\strategicuser 1}^\spadesuit)) + \sign(\optglobalmodel (\modelsub{\strategicuser 1}^\spadesuit) - \modelsub{\strategicuser 2}^\spadesuit) = \partial_1 F(\optglobalmodel (\modelsub{\strategicuser 1}^\spadesuit), \modelsub{\strategicuser 2}^\spadesuit)$.
Thus the subderivatives $F(\cdot, \modelsub{\strategicuser 2}^\spadesuit)$ at $\globalmodel = \optglobalmodel (\modelsub{\strategicuser 1}^\spadesuit)$ are negative or nil.
This implies that the optimum of $F(\cdot, \modelsub{\strategicuser 2}^\spadesuit)$ is on the right of $\optglobalmodel(\modelsub{\strategicuser 1}^\spadesuit)$.
In other words, we must have $\optglobalmodel(\modelsub{\strategicuser 1}^\spadesuit) \leq \optglobalmodel(\modelsub{\strategicuser 2}^\spadesuit)$.

Let us now define $\optglobalmodel_{min} \triangleq \inf_{\attackmodel \in \setR^d} \optglobalmodel(\attackmodel) \in \setR \cup \set{-\infty}$ 
and $\optglobalmodel_{max} \triangleq \sup_{\attackmodel \in \setR^d} \optglobalmodel(\attackmodel) \in \setR \cup \set{+\infty}$.
Now consider $\attackmodel \in (\optglobalmodel_{min}, \optglobalmodel_{max})$.
We thus know that there exists $\modelsub{\strategicuser 1}^\spadesuit, \modelsub{\strategicuser 2}^\spadesuit \in \setR$ such that $\optglobalmodel(\modelsub{\strategicuser 1}^\spadesuit) \leq \attackmodel \leq \optglobalmodel(\modelsub{\strategicuser 2}^\spadesuit)$.
By the monotonicity of $\optglobalmodel$, we know that $\modelsub{\strategicuser 1}^\spadesuit \leq \attackmodel \leq \modelsub{\strategicuser 2}^\spadesuit$.
Moreover, the optimality of $\optglobalmodel(\modelsub{\strategicuser 1}^\spadesuit)$ implies that $0 \in \partial f(\optglobalmodel (\modelsub{\strategicuser 1}^\spadesuit)) + \sign(\optglobalmodel (\modelsub{\strategicuser 1}^\spadesuit) - \modelsub{\strategicuser 1}^\spadesuit) \geq \partial f(\optglobalmodel (\modelsub{\strategicuser 1}^\spadesuit)) - 1$,
since the minimal value of the sign function is $-1$.
Similarly, by the optimality of $\optglobalmodel(\modelsub{\strategicuser 2}^\spadesuit)$, we have $\partial f(\optglobalmodel(\modelsub{\strategicuser 2}^\spadesuit)) \leq 1$.
Since $\partial f$ is nondecreasing, we must then have $\partial f(\attackmodel) \subset [-1,1]$. 
But then, denoting $g \in \partial f(\attackmodel)$, since $\sign(\attackmodel-\attackmodel) = \sign(0) = [-1,1]$, we have $\partial_1 F(\attackmodel, \attackmodel) \supset g + [-1,1] = [g -1, g +1]$.
Since $g \in [-1,1]$, we know that $\partial_1 F(\attackmodel, \attackmodel)$ intersects 0, which proves that $\optglobalmodel(\attackmodel) = \attackmodel$.

Now consider $\attackmodel < \optglobalmodel_{min}$.
By the definition of $\optglobalmodel_{min}$, we know that $\optglobalmodel_{min} \leq \optglobalmodel(\attackmodel)$.
As a result, $\sign(\optglobalmodel(\attackmodel) - \attackmodel) = -1$
We then know that $0 \in \partial_1 F(\optglobalmodel(\attackmodel), \attackmodel) = \partial f(\optglobalmodel(\attackmodel)) -1$.
But note that this equality property holds for all $\attackmodel < \optglobalmodel_{min}$.
Therefore, $\optglobalmodel(\optglobalmodel_{min} - 1) = \optglobalmodel(\attackmodel)$ for all $\attackmodel < \optglobalmodel_{min}$.
But since $\optglobalmodel$ is nondecreasing, we also know that $\optglobalmodel_{min} = \lim_{\attackmodel \rightarrow \infty} \optglobalmodel(\attackmodel) = \optglobalmodel(\optglobalmodel_{min} - 1)$.
Thus, in fact, for any $\attackmodel < \optglobalmodel_{min}$, we have $\optglobalmodel(\attackmodel) = \optglobalmodel_{min}$.
From this, it also follows that $\partial f(\optglobalmodel_{min})$ contains $-1$, which implies that $\optglobalmodel(\optglobalmodel_{min}) = \optglobalmodel_{min}$.

Finally, we deal similarly with the case of $\optglobalmodel_{max}$.
Namely, similarly, we show that for all $\attackmodel \geq \optglobalmodel_{max}$, we have $\optglobalmodel(\attackmodel) = \optglobalmodel_{max}$.
\end{proof}

\begin{lemma}[Strategyproofness in dimension 1]
\label{lemma:strategyproof_median}
  Consider a strictly convex function $f : \setR \rightarrow \setR_+$, and denote $\optglobalmodel(\attackmodel) = \argmin_{\globalmodel \in \setR} \set{f(\globalmodel) + \absv{\globalmodel - \attackmodel}}$. 
  Then reporting $\attackmodel$ honestly minimizes the distance to the honest preferences, i.e., 
  \begin{equation}
    \forall \honestmodel, \attackmodel \in \setR \mathsep \absv{\optglobalmodel(\honestmodel{}) - \honestmodel} \leq \absv{\optglobalmodel(\attackmodel) - \honestmodel{}}.
  \end{equation}
\end{lemma}

\begin{proof}

  As in Lemma \ref{lemma:dimension_1_minimizer}, we simply distinguish the three cases $\honestmodel \leq \optglobalmodel_{min}$, $\optglobalmodel_{min} \leq \honestmodel \leq \optglobalmodel_{max}$ and $\honestmodel \leq \optglobalmodel_{max}$.
  In the second case, the left-hand side of the lemma is zero, which makes the inequality clear.
  In the first and third case, the left-hand side is equal to $\absv{\optglobalmodel_{min} - \honestmodel}$ and $\absv{\honestmodel - \optglobalmodel_{max}}$ respectively.
  The inequality then follows from the definition of $\optglobalmodel_{min}$ and $\optglobalmodel_{max}$.
\end{proof}

\begin{lemma}
\label{lemma:strategyproof_median_median}
  Consider two strictly convex functions $f$ and $g$ (we also allow $g = 0$). 
  We define $\optglobalmodel(\attackmodel) \triangleq \argmin_{\globalmodel} f(\globalmodel) + \absv{\globalmodel - \attackmodel}$ and $\optmodelsub{\targetuser}(\globalmodel) \triangleq \argmin_{\modelsub{\targetuser}} g(\modelsub{\targetuser}) + \absv{\globalmodel - \modelsub{\targetuser}}$.
  Then,
  \begin{equation}
  \label{eq:strategyproof_median_median}
    \forall \honestmodelsub{\strategicuser}, \attackmodel \in \setR \mathsep 
    \absv{\optmodelsub{\targetuser}(\optglobalmodel(\honestmodelsub{\strategicuser})) - \honestmodelsub{\strategicuser}}
    \leq \absv{\optmodelsub{\targetuser}(\optglobalmodel(\attackmodel)) - \honestmodelsub{\strategicuser}}.
  \end{equation}
\end{lemma}

\begin{proof}
  Denote $\optglobalmodel_{min} \triangleq \min \optglobalmodel$ and $\optglobalmodel_{max} \triangleq \max \optglobalmodel$ the minimal and maximal values of $\optglobalmodel$.
  By Lemma \ref{lemma:dimension_1_minimizer}, for $\honestmodelsub{\strategicuser} \in [\optglobalmodel_{min}, \optglobalmodel_{max}]$ (or if $g=0$), 
  we know that $\absv{\optmodelsub{\targetuser}(\globalmodel) - \honestmodelsub{\strategicuser}}$ is minimized for $\globalmodel = \honestmodelsub{\strategicuser}$, 
  which is achieved by reporting $\attackmodel = \honestmodelsub{\strategicuser}$.
  
  Now assume $\honestmodelsub{\strategicuser} \leq \optglobalmodel_{min}$.
  By Lemma \ref{lemma:dimension_1_minimizer}, for any $\attackmodel$, 
  we know that $\optglobalmodel (\attackmodel) \geq \optglobalmodel_{min} = \optglobalmodel(\honestmodelsub{\strategicuser})$.
  Then, by monotonicity of $\optmodelsub{\targetuser}$ (Lemma \ref{lemma:dimension_1_minimizer}), 
  then, for any $\attackmodel$, we have $\optmodelsub{\targetuser} (\optglobalmodel(\attackmodel)) \geq \optmodelsub{\targetuser} (\optglobalmodel(\honestmodelsub{\strategicuser}))$.

  Now, if $\optmodelsub{\targetuser} (\optglobalmodel(\honestmodelsub{\strategicuser})) = \max_{\globalmodel} \optmodelsub{\targetuser} (\globalmodel)$, 
  then we must have $\optmodelsub{\targetuser} (\optglobalmodel(\attackmodel)) = \optmodelsub{\targetuser} (\optglobalmodel(\honestmodelsub{\strategicuser}))$, and thus Equation (\ref{eq:strategyproof_median_median}) is actually an equality (and thus the inequality holds).
  Otherwise, if $\optmodelsub{\targetuser} (\optglobalmodel(\honestmodelsub{\strategicuser})) < \max_{\globalmodel} \optmodelsub{\targetuser} (\globalmodel)$, 
  then by Lemma \ref{lemma:dimension_1_minimizer}, we must have $\optmodelsub{\targetuser} (\optglobalmodel(\honestmodelsub{\strategicuser})) \geq \optglobalmodel(\honestmodelsub{\strategicuser})$.
  We then have $\optmodelsub{\targetuser} (\optglobalmodel(\attackmodel)) \geq \optmodelsub{\targetuser} (\optglobalmodel(\honestmodelsub{\strategicuser})) \geq \optglobalmodel(\honestmodelsub{\strategicuser}) \geq \optglobalmodel_{min} \geq \honestmodelsub{\strategicuser}$.
  In particular, we thus have $\optmodelsub{\targetuser} (\optglobalmodel(\attackmodel)) \geq \optmodelsub{\targetuser} (\optglobalmodel(\honestmodelsub{\strategicuser})) \geq \honestmodelsub{\strategicuser}$, from which the lemma follows.
  
  The case $\honestmodelsub{\strategicuser} \geq \optglobalmodel_{max}$ is derived similarly.
\end{proof}

\subsection{Combining it all}

\begin{lemma}
\label{lemma:invariant_norm}
  If $\absv{\varxsub{\coordinate}} \geq \absv{\varysub{\coordinate}}$ for all coordinates $\coordinate \in [D]$,
  then $\norm{\varx}{2} \geq \norm{\vary}{2}$.
\end{lemma}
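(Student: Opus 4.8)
The plan is to reduce the general coordinate-wise domination to a sequence of moves that each alter a single coordinate, and then to handle each such move using only convexity of the norm and the reflexion-invariance hypothesis. Concretely, I would introduce the hybrid vectors $\varw^{(0)}, \ldots, \varw^{(D)} \in \setR^D$, where $\varw^{(k)}$ agrees with $\vary$ on coordinates $1, \ldots, k$ and with $\varx$ on coordinates $k+1, \ldots, D$; thus $\varw^{(0)} = \varx$ and $\varw^{(D)} = \vary$, and consecutive hybrids differ in exactly one coordinate. If I can show $\nodenorm{}(\varw^{(k)}) \leq \nodenorm{}(\varw^{(k-1)})$ for every $k \in [D]$, the lemma follows by chaining these $D$ inequalities.

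For the single-coordinate step, suppose $\vary'$ agrees with $\varx'$ except possibly on coordinate $\coordinate$, with $\absv{\vary'_\coordinate} \leq \absv{\varx'_\coordinate}$. If $\varx'_\coordinate = 0$ then $\vary' = \varx'$ and there is nothing to do; otherwise, after applying the reflexion $\symmetry{\coordinate}$ to both vectors if necessary — which changes neither norm by hypothesis and preserves the domination — I may assume $\varx'_\coordinate > 0$, so that $\vary'_\coordinate \in [-\varx'_\coordinate, \varx'_\coordinate]$. Setting $\lambda \triangleq (\vary'_\coordinate + \varx'_\coordinate)/(2\varx'_\coordinate) \in [0,1]$, a direct check on each coordinate shows $\vary' = \lambda \varx' + (1-\lambda)\symmetry{\coordinate}(\varx')$. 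Convexity of $\nodenorm{}$ together with $\nodenorm{}(\symmetry{\coordinate}(\varx')) = \nodenorm{}(\varx')$ then gives $\nodenorm{}(\vary') \leq \lambda \nodenorm{}(\varx') + (1-\lambda)\nodenorm{}(\varx') = \nodenorm{}(\varx')$, as required. Applying this with $\varx' = \varw^{(k-1)}$, $\vary' = \varw^{(k)}$ and $\coordinate = k$ completes the induction.

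This argument is entirely elementary, so there is no serious obstacle; the only point that requires a moment's care is the normalization step, i.e. using the reflexion $\symmetry{\coordinate}$ to reduce to the case $\varx'_\coordinate > 0$, since it is precisely this reduction that makes the convex-combination representation $\vary' = \lambda \varx' + (1-\lambda)\symmetry{\coordinate}(\varx')$ — and hence the appeal to convexity — legitimate. One should also state explicitly that a norm is convex (from the triangle inequality and homogeneity), which is the only nontrivial property being invoked in the single-coordinate step.
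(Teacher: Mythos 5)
Your proof is correct, and it rests on the same two ingredients as the paper's proof --- reflexion invariance and convexity of the norm, applied one coordinate at a time --- but the execution of the single-coordinate step is genuinely different. The paper only uses the midpoint $\tfrac{1}{2}(\varx + \symmetry{\coordinate}(\varx))$ to show that the one-variable slice $\varxsub{\coordinate} \mapsto \nodenorm{}(\varxsub{\coordinate}, \varxsub{-\coordinate})$ attains its minimum at $0$, and then argues via subderivatives (their convexity and monotonicity) that this slice is nondecreasing in $\absv{\varxsub{\coordinate}}$, leaving the coordinate-by-coordinate chaining implicit. You instead make the chaining explicit through the hybrid vectors $\varw^{(0)}, \ldots, \varw^{(D)}$ and dispense with subgradients entirely by writing the dominated vector as the convex combination $\vary' = \lambda \varx' + (1-\lambda)\symmetry{\coordinate}(\varx')$ with $\lambda = (\vary'_\coordinate + \varx'_\coordinate)/(2\varx'_\coordinate) \in [0,1]$, after the harmless normalization $\varx'_\coordinate > 0$; the desired inequality then follows in one line from the triangle inequality, homogeneity and invariance. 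Your route is more elementary and self-contained (no appeal to monotone subderivatives of a convex function), while the paper's route packages the same symmetrization idea into a monotonicity statement about each coordinate slice; both arguments are complete and establish the lemma in full generality.
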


\begin{proof}
  This is clear, given that $\norm{\varx}{2}^2 = \sum \absv{\varxsub{\coordinate}}^2$ is an increasing function of all terms $\absv{\varxsub{\coordinate}}$.
\end{proof}

\begin{reptheorem}{th:strategyproof}
    Assume that the local losses are gradient PAC* and coordinate-wise separable.
    Then \licchavi{} is both global and user-targeted strategyproof.
\end{reptheorem}

\begin{proof}[Proof of Theorem \ref{th:strategyproof}]
  We apply lemmas \ref{lemma:strategyproof_median} and \ref{lemma:strategyproof_median_median} with functions 
  \begin{align}
    f_{\coordinate} (\globalmodel_{\coordinate}) 
    &\triangleq \frac{1}{\regweightsub{\strategicuser} \diagonal{\strategicuser \coordinate}} 
    \min_{\modelfamily_{-\strategicuser \coordinate}} \licch_{-\strategicuser \coordinate} (\globalmodel_{\coordinate}, \modelfamily_{-\strategicuser \coordinate}, \datafamily{-\strategicuser \coordinate}), \\
    g_{\targetuser \coordinate} (\modelsub{\targetuser \coordinate})
    &\triangleq \frac{1}{\regweightsub{\targetuser} \diagonal{\targetuser \coordinate}} 
    \localloss{\targetuser \coordinate} (\modelsub{\targetuser \coordinate}).
  \end{align}
  From this it follows that, for any $\coordinate \in [d]$, any target user $\targetuser \in [\USER]$, any honest parameter $\honestmodelsub{\strategicuser}$ and any strategic vector $\attackmodel$, we have 
  \begin{align}
    \absv{\optglobalmodel_\coordinate(\honestmodelsub{\strategicuser}) - \honestmodelsub{\strategicuser \coordinate}} 
    &\leq \absv{\optglobalmodel_\coordinate(\attackmodel, \datafamily{-\strategicuser}) - \honestmodelsub{\strategicuser \coordinate}}, \\
    \absv{\optmodelsub{\targetuser}(\optglobalmodel_\coordinate(\honestmodelsub{\strategicuser})) - \honestmodelsub{\strategicuser \coordinate}} 
    &\leq \absv{\optmodelsub{\targetuser} (\optglobalmodel_\coordinate(\attackmodel, \datafamily{-\strategicuser})) - \honestmodelsub{\strategicuser \coordinate}}.
  \end{align}
  Combining Lemma~\ref{lemma:equivalence_data_model} and Lemma~\ref{lemma:invariant_norm} then allows to conclude. 
\end{proof}

\section{The quadratic setting}
\label{app:asymptotic}

In this section, we detail the proof of Theorem~\ref{th:licchavi_quadratic}, which states the $\strategyproofbound$-strategyproofness of \licchavi{} against a quadratic loss.

\subsection{Characterizing the effect of model attacks}

\begin{lemma}
\label{lemma:achievable_set}
  $\achievableset(\sdpmatrix) = \set{\sdpmatrix^{-1} \varz \st \varz \in [-1,1]^\dimension}$.
\end{lemma}

\begin{proof}
For each coordinate $\coordinate \in [\dimension]$, 
we have $\partial_\coordinate \licch{} = \sign(\globalmodel_\coordinate - \modelsub{\strategicuser \coordinate}^\spadesuit) + (\sdpmatrix \globalmodel)_\coordinate$.
The optimality of of $\globalmodel^\licch{}$ implies
$(\sdpmatrix \globalmodel^\licch{})_\coordinate \in \sign(\globalmodel_\coordinate - \modelsub{\strategicuser \coordinate}^\spadesuit) \subset [-1,1]$.
Thus $\globalmodel^\licch{} \in \set{\sdpmatrix^{-1} \varz \st \varz \in [-1,1]^\dimension}$, 
which proves that $\achievableset$ is included a the deformed hypercube.

Conversely, let $\varz \in [-1,1]^\dimension$.
We consider $\attackmodel \triangleq \sdpmatrix^{-1} \varz$.
Then, for $\globalmodel = \attackmodel$, 
we have $\partial_\coordinate \licch{} = \sign(\globalmodel_\coordinate - \modelsub{\strategicuser \coordinate}^\spadesuit) + (\sdpmatrix \globalmodel)_\coordinate = [-1,1] + (\sdpmatrix \sdpmatrix^{-1} \varz)_i = [-1,1] + \varz_i$.
Because $\varz \in [-1,1]^\dimension$, this set contains $0$.
Thus the partial derivatives of \licch{} at $\globalmodel = \attackmodel$ are all nil,
which implies $\globalmodel^{\licch{}} (\attackmodel) = \attackmodel = \sdpmatrix^{-1} \varz$.
Thus, in particular, we have $\sdpmatrix^{-1} \varz \in \achievableset(\sdpmatrix)$,
which is the needed opposite inclusion.
\end{proof}

To state our result, we now define the crookedness of $\sdpmatrix \succ 0$ by
\begin{equation}
    \crookedness(\sdpmatrix) \triangleq 
    \sup_{\varx \in \setR^d} 
    \inf_{\underset{\sign(\vary) = \sign(x)}{\vary \in \setR^d}} 
    \frac{\norm{\varx}{2} \norm{\sdpmatrix \vary}{2}}{\varx^T \sdpmatrix \vary} -1,
\end{equation}
where $\sign$ applies the $\sign$ function on each coordinate (and thus implies $\vary_\coordinate = 0$ whenever $\varx_\coordinate = 0$).
Denote $\canonicalbasis$ the canonical basis.
For any $\kappa \in \set{\set{-1}, (-1,+1), \set{+1}}^\dimension$, 
we consider the corresponding hypercube face defined by $\face (\kappa) \triangleq \prod_{\coordinate \in [\dimension]} \kappa_\coordinate$,
and we denote $\sdpmatrix^{-1} \cdot \face(\kappa) \triangleq \set{\sdpmatrix^{-1} \varz \st \varz \in \face(\kappa)}$.
Let us also define $\edge(\kappa) \triangleq \set{\coordinate \in [\dimension] \st \kappa_\coordinate = \set{-1} ~\text{or}~ \kappa_\coordinate = \set{+1}}$.
Now denote 
\begin{equation}
    \ker (\kappa) \triangleq \sdpmatrix^{-1} \cdot \face(\kappa) 
    + \sum_{\coordinate \in \edge(\kappa)} \kappa_\coordinate \setR_+ \canonicalvector{\coordinate}.
\end{equation}

\begin{lemma}
\label{lemma:model_relation}
$\globalmodel^{\licch{}} (\ker(\kappa)) = \sdpmatrix^{-1} \cdot \face(\kappa)$.
\end{lemma}

\begin{proof}
We show that for any $\model \in \ker(\kappa)$, we must have $\globalmodel \in \sdpmatrix^{-1} \cdot \face(\kappa)$.
Consider $\model \in \ker(\kappa)$.
Then there exists $\globalmodel \in \sdpmatrix^{-1} \cdot \face(\kappa)$ and nonnegative scalars $\varx_\coordinate \geq 0$ for $\coordinate \in \edge(\kappa)$ such that
$\model = \globalmodel + \sum_{\coordinate \in \edge(\kappa)} \varx_\coordinate \kappa_\coordinate \canonicalvector{\coordinate}$.
Now note that $\partial_\coordinate \licch{} = \sign(\globalmodel_\coordinate - \modelsub{ \coordinate}) + (\sdpmatrix \globalmodel)_\coordinate$, which means
\begin{equation}
\label{eq:derivative}
      \partial_\coordinate \licch{} = 
      \begin{cases}
        {[-1,1] + (\sdpmatrix \globalmodel)_\coordinate}, & \text{if}\ \coordinate \notin \edge(\kappa)\\
        {-\kappa_\coordinate + (\sdpmatrix \globalmodel)_\coordinate}, & \text{if}\ \coordinate \in \edge(\kappa)\\
      \end{cases}.
  \end{equation}
But now for any $\coordinate \notin \edge(\kappa)$, we have $(\sdpmatrix \globalmodel)_\coordinate \in \kappa_\coordinate = (-1,1)$, and thus $0 \in \partial_\coordinate \licch{}$. Also, for $\coordinate \in \edge(\kappa)$, $(\sdpmatrix \globalmodel)_\coordinate = \kappa_\coordinate$, and thus $\partial_\coordinate \licch{} = 0$.
Therefore, all of the partial derivatives of $\licch{}$ at $\globalmodel$ are 0 which means $\globalmodel^{\licch{}} (\model) = \globalmodel$. This concludes the proof.
\end{proof}
\begin{lemma}
\label{lemma:image_of_kernel}
For any $\model \in \ker(\kappa)$, there exist unique nonnegative numbers $\varx_\coordinate \geq 0$ for $\coordinate \in \edge(\kappa)$ 
such that $\model = \globalmodel^{\licch{}} (\model) + \sum_{\coordinate \in \edge(\kappa)} \kappa_\coordinate \varx_\coordinate \canonicalvector{\coordinate}$ and $ \globalmodel^{\licch{}} (\model)  \in \sdpmatrix^{-1} \cdot \face(\kappa)$.
\end{lemma}

\begin{proof}
  By definition, since $\model \in \ker(\kappa)$, there must exist $\globalmodel \in \sdpmatrix^{-1} \cdot \face(\kappa)$ and $\varx_\coordinate \geq 0$ for $\coordinate \in \edge(\kappa)$ such that $\model = \globalmodel + \sum_{\coordinate \in \edge(\kappa)} \kappa_\coordinate \varx_\coordinate \canonicalvector{\coordinate}$. Now, in a similar manner to (\ref{eq:derivative}) in Lemma (\ref{lemma:model_relation}), we obtain that $0 \in \nabla \licch{}(\globalmodel)$, and thus $\globalmodel^{\licch{}} (\model) = \globalmodel$.
  Now note that by the strict convexity of $\licch{}$, we know that $ \globalmodel^{\licch{}} (\model)$ is unique. We now show that scalars $\varx_\coordinate$ are also unique. Assume we have two sets of non-negative scalars $\{\varx_\coordinate\}$ and $\{\vary_\coordinate\}$ such that  $\model = \globalmodel^{\licch{}} (\model) + \sum_{\coordinate \in \edge(\kappa)} \kappa_\coordinate \varx_\coordinate \canonicalvector{\coordinate} = \globalmodel^{\licch{}} (\model) + \sum_{\coordinate \in \edge(\kappa)} \kappa_\coordinate \vary_\coordinate \canonicalvector{\coordinate}$.
  This implies that $\sum_{\coordinate \in \edge(\kappa)} \kappa_\coordinate (\varx_\coordinate-\vary_\coordinate) \canonicalvector{\coordinate} = 0$. Now since $\canonicalvector{\coordinate}$s are linearly independant, we must have $\varx_\coordinate =\vary_\coordinate$ for all $\coordinate \in \edge(\kappa)$. This proves that the set of scalars $\{\varx_\coordinate\}$ is unique.
\end{proof}

\begin{lemma}
\label{lemma:kernel_image}
$\model \in \ker(\kappa)$ if and only if $\globalmodel^\licch{} (\model) \in \sdpmatrix^{-1} \cdot \face(\kappa)$.
\end{lemma}

\begin{proof}
The first direction is proved by Lemma \ref{lemma:image_of_kernel}. Here we prove the opposite direction, i.e., if $\globalmodel^\licch{} (\model) \in \sdpmatrix^{-1} \cdot \face(\kappa)$ then $\model \in \ker(\kappa)$. By the optimality of $\globalmodel^\licch{} (\model)$, we must have $0 \in \partial_\coordinate \licch{} (\globalmodel^\licch{} (\model) | \model, \sdpmatrix{})$, for all $\coordinate \in [d]$, which means 
\begin{equation}
    \forall i \in [d], 0 \in  \sign((\globalmodel^\licch{} (\model))_\coordinate - \modelsub{ \coordinate}) + (\sdpmatrix \globalmodel^\licch{} (\model))_\coordinate.
\end{equation}
Now, if $\coordinate \in \edge(\kappa)$, then $(\sdpmatrix \globalmodel^\licch{} (\model))_\coordinate = \kappa_\coordinate$, and thus $ \sign((\globalmodel^\licch{} (\model))_\coordinate - \modelsub{ \coordinate}) = - \kappa_\coordinate$. Therefore, we must have $\model_\coordinate = (\globalmodel^\licch{} (\model))_\coordinate + \varx_\coordinate \kappa_\coordinate$ for $\varx_\coordinate \geq 0$. On the other hand, if $\coordinate \notin \edge(\kappa)$, then $1<(\sdpmatrix \globalmodel^\licch{} (\model))_\coordinate<1$, which implies $-1< \sign((\globalmodel^\licch{} (\model))_\coordinate - \modelsub{ \coordinate})<1$. For this inequality to hold, we must have $\modelsub{ \coordinate} = (\globalmodel^\licch{} (\model))_\coordinate$. This proves the other direction and hence the lemma.
\end{proof}

\begin{lemma}
\label{lemma:face_partition}
The faces $\sdpmatrix^{-1} \cdot \face(\kappa)$ partition $\sdpmatrix^{-1} \cdot [-1,1]^\dimension$.
\end{lemma}

\begin{proof}
It is clear that the faces $\face(\kappa)$ partition $[-1,1]^\dimension$.
Since $\sdpmatrix^{-1}$ is invertible, the lemma follows.
\end{proof}

\begin{lemma}
\label{lemma:kernel_partiiton}
The spaces $\ker(\kappa)$ partition $\setR^\dimension$.
\end{lemma}

\begin{proof}
We show that any $\model \in \setR^d$ belongs to $\ker(\kappa)$ for exactly one choice of $\kappa$. Consider $\model \in \setR^d$. By the strong convexity of $\licch{}(\globalmodel | \model, \sdpmatrix{})$, we know that $\globalmodel^\licch{} (\model)$ is unique. Using Lemma \ref{lemma:achievable_set} and the fact that $\sdpmatrix^{-1} \cdot \face(\kappa)$ partitions $\sdpmatrix^{-1} \cdot [-1,1]^d$ (Lemma~\ref{lemma:face_partition}), 
we obtain that there exists a unique $\kappa$ such that $\globalmodel^\licch{} (\model) \in \sdpmatrix^{-1} \cdot \face(\kappa)$. Lemma \ref{lemma:kernel_image} then concludes.
\end{proof}

\subsection{Proof of $\strategyproofbound$-strategyproofness}

\begin{reptheorem}{th:licchavi_quadratic}
\licchavi{} against positive definite matrix $\sdpmatrix$ is $\crookedness (\sdpmatrix)$-strategyproof.
\end{reptheorem}

\begin{proof}
By Lemma \ref{lemma:achievable_set}, we know that the achievable set  $\achievableset(\sdpmatrix)$ of all possible global models for the strategic user is the deformed unit hypercube (parallelepiped) $\sdpmatrix^{-1} \cdot [-1,1]^d$. Now we consider two different cases separately: 

{\bf Case i)} $\modelsub{\strategicuser} \in \achievableset(\sdpmatrix)$. In this case we have $0 \in \nabla  \licch{} (\globalmodel | \modelsub{\strategicuser}, \sdpmatrix{})$ for $\globalmodel = \modelsub{\strategicuser}$, and thus $\globalmodel(\modelsub{\strategicuser}) = \modelsub{\strategicuser}$. Therefore, it is not possible for the strategic user to gain by misreporting their local model.

\newcommand{\closest}{z_s}

{\bf Case ii)} $\modelsub{\strategicuser} \notin \achievableset(\sdpmatrix)$.
Note that the achievable set $\achievableset(\sdpmatrix)$ can be characterized using $2d$ inequalities as
\begin{equation}
    \set{\varz: \forall \coordinate \in [d], \forall j \in \set{-1,1}, (j\canonicalvector{\coordinate})^T \sdpmatrix \varz \leq 1}.
\end{equation}
Now by Lemma \ref{lemma:kernel_partiiton}, there must exist $\kappa \in \set{\set{-1}, (-1,+1), \set{+1}}^\dimension$ such that  $\modelsub{\strategicuser} \in \ker(\kappa)$. 
Plus since $\modelsub{\strategicuser}$ does not belong to the achievable set, 
$\edge(\kappa)$ is not empty. 
Now by Lemma \ref{lemma:image_of_kernel}, we have $\model = \globalmodel^{\licch{}} (\model) + \varx$ for $ \globalmodel^{\licch{}} (\model)  \in \sdpmatrix^{-1} \cdot \face(\kappa)$ and $\varx \triangleq \sum_{\coordinate \in \edge(\kappa)} \kappa_\coordinate \varx_\coordinate \canonicalvector{\coordinate}$, 
which implies $(\kappa_\coordinate \canonicalvector{\coordinate})^T\sdpmatrix \globalmodel^{\licch{}} (\model) = 1$ for all $\coordinate \in \edge(\kappa)$. 
We now lower bound the distance between $\modelsub{\strategicuser}$ and any point $\varz$ in the achievable set. 
For this, consider the inequalities associated to $\coordinate \in \edge(\kappa)$, i.e., for any  $\coordinate \in \edge(\kappa)$, we have $(\kappa_\coordinate \canonicalvector{\coordinate})^T\sdpmatrix \varz \leq 1$. 
Now consider any convex combination of these inequalities, yielding $\vary^T\sdpmatrix \varz \leq 1$, 
for $\vary = \sum_{\coordinate \in \edge(\kappa)} \vary_\coordinate \kappa_\coordinate \canonicalvector{\coordinate}$
with non-negative scalars $\vary_\coordinate\geq 0$ such that $\sum \vary_\coordinate = 1$. 
Each of these inequalities for any set $\{\vary_\coordinate\}$ defines a closed half space containing the achievable set and with $\globalmodel^\licch{}(\modelsub{\strategicuser})$ on its boundary. 
Therefore, for any point $\varz \in \achievableset(\sdpmatrix)$, 
the distance between $\varz$ and $\modelsub{\strategicuser}$ is at least the distance between $\modelsub{\strategicuser}$ and its orthogonal projection on the half space $\vary^T\sdpmatrix \varz \leq 1$.
In equations, this implies
\begin{align}
    \norm{\modelsub{\strategicuser}-\varz}{2} \geq  \frac{\left(\modelsub{\strategicuser}- \globalmodel^\licch{}(\modelsub{\strategicuser})\right)^T 
    \left(\sdpmatrix \vary \right)}{\norm{\sdpmatrix \vary}{2}}
    = \frac{\varx^T \sdpmatrix \vary}{\norm{\sdpmatrix \vary}{2}}.
\end{align}
Now note that this inequality holds for any $\vary$. Thus, we obtain 
\begin{equation}
    \norm{\modelsub{\strategicuser}-\varz}{2} \geq \sup_{\vary} \frac{\varx^T \sdpmatrix \vary}{\norm{\sdpmatrix \vary}{2}},
\end{equation}
Note that as the magnitude of $\vary$ cancels out in the nominator and the denominator, the above inequality holds for any $\vary$ such that 
$\vary_i\geq 0$ for all $i \in \edge(\kappa)$, i.e.,
\begin{equation}
    \norm{\modelsub{\strategicuser}-\varz}{2} \geq \sup_{\vary_i\geq0} \frac{\varx^T \sdpmatrix \vary}{\norm{\sdpmatrix \vary}{2}} \geq  \sup_{\underset{\sign(\vary) = \sign(\varx)}{\vary \in \setR^d}} \frac{\varx^T \sdpmatrix \vary}{\norm{\sdpmatrix \vary}{2}},
\end{equation}
where the second inequality comes from the fact that $\sign(\vary) = \sign(\varx)$ implies $\vary_i\geq 0$ for all $i \in \edge(\kappa)$.
We then obtain
\begin{align}
    \frac{\norm{\modelsub{\strategicuser}-\globalmodel^\licch{}(\modelsub{\strategicuser})}{2}}{\inf_{\varz \in \achievableset(\sdpmatrix)} \norm{\modelsub{\strategicuser}-\varz}{2}} 
    \leq \sup_{\underset{\sign(\vary) = \sign(\varx)}{\vary \in \setR^d}} \frac{\norm{\varx}{2} \norm{\sdpmatrix \vary}{2}}{\varx^T \sdpmatrix \vary}
    \leq \crookedness(\sdpmatrix) +1.
\end{align}
Hence, the theorem.
\end{proof}

\subsection{Crookedness is smaller than skewness}

To prove Proposition~\ref{prop:crookedness_vs_skewness}, which says that crookedness is smaller than skewness, with strict inequality for some matrices, 
we first recall a lemma from~\cite{geometric_median} about skewness.

\begin{lemma}[Proposition 12 in \cite{geometric_median}]
\label{lemma:skewness_lower_bound}
   Denote $\Lambda \triangleq \frac{\max \spectrum(\sdpmatrix)}{\min \spectrum(\sdpmatrix)}$ the ratio of extreme eigenvalues. Then,
  \begin{equation}
    \skewness(\sdpmatrix) \geq \frac{1 + \Lambda}{2 \sqrt{\Lambda}} - 1.
  \end{equation}
\end{lemma}

We now prove the proposition.

\begin{repproposition}{prop:crookedness_vs_skewness}
Let 
    $\skewness (\sdpmatrix) \triangleq 
    \sup_{\varx \in \setR^\dimension} 
      \frac{\norm{\varx}{2} \norm{\sdpmatrix \varx}{2}}{ \varx^T \sdpmatrix \varx }  - 1$.
Then, for any $\sdpmatrix \succ 0$, we have 
    $\crookedness(\sdpmatrix) \leq \skewness(\sdpmatrix)$.
Moreover, there are definite positive matrices $\sdpmatrix \succ 0$ for which the inequality is strict.
\end{repproposition}

\begin{proof}%
The inequality $\crookedness{} \leq \skewness{}$ is evident by setting $\vary = \varx$ in the definition of $\crookedness{}$ (Equation (\ref{eq:crookedness})). 

We now prove for some matrices the inequality is strict. Consider a diagonal matrix $\sdpmatrix = \diag(\lambda_1,\ldots,\lambda_d)$  with eigenvalues $\lambda_1 \geq \ldots \geq \lambda_d > 0$. Now for any vector $\varx \in \setR^d$, define $\vary(\varx) \triangleq \sdpmatrix^{-1}x$. This implies that $\sign(y(x))=\sign(x)$. We then obtain that 
\begin{equation}
    \frac{\norm{\varx}{2} \norm{\sdpmatrix \vary}{2}}{\varx^T \sdpmatrix \vary} = \frac{\norm{\varx}{2}\norm{\varx}{2}}{\varx^T\varx} = 1.
\end{equation}
As this is true for any arbitrary vector $\varx \in \setR^d$, we obtain that
\begin{equation}
    \crookedness(\sdpmatrix) \triangleq 
    \sup_{\varx \in \setR^d} 
    \inf_{\underset{\sign(\vary) = \sign(x)}{\vary \in \setR^d}} 
    \frac{\norm{\varx}{2} \norm{\sdpmatrix \vary}{2}}{\varx^T \sdpmatrix \vary} -1 = \sup_{\varx \in \setR^d} 
    \inf_{\underset{\sign(\vary) = \sign(x)}{\vary \in \setR^d}}  1 -1 = 0.
\end{equation}
But now by Lemma \ref{lemma:skewness_lower_bound}, we have $\skewness(\sdpmatrix) \geq \frac{1 + \Lambda}{2 \sqrt{\Lambda}} - 1$ for $\Lambda = \lambda_1/\lambda_d$. Therefore, $\skewness(\sdpmatrix)$ may take arbitrarily large values for $\Lambda$ large enough. In particular, for $\Lambda>0$, we have $\crookedness(\sdpmatrix) < \skewness (\sdpmatrix)$.
\end{proof}

\end{document}